\newcommand{\mysec}[1]{Section~\ref{sec:#1}}
\newcommand{\mysecan}[1]{Section~\ref{an:sec:#1}}
\pgfplotsset{compat=newest}
\newcommand{\note}[1]{{\textbf{\color{red}#1}}}
\newcommand{\noteal}[1]{{\textbf{\color{blue}#1}}}
\newcommand{\note}[1]{}
\newcommand{\noteal}[1]{}
\DeclareMathOperator*{\argmin}{arg\,min}
\providecommand{\abs}[1]{\left|#1\right|}
\providecommand{\norm}[1]{\left\|#1\right\|}
\providecommand{\esp}[1]{\mathbb{E}\left[#1\right]}
\providecommand{\variance}[1]{\mathbb{V}\left[#1\right]}
\providecommand{\diag}[1]{\mathrm{diag}\left(#1\right)}
\providecommand{\Diag}[1]{\mathrm{Diag}\left(#1\right)}
\providecommand{\prob}[1]{\mathbb{P}\left\{#1\right\}}
\providecommand{\reel}{\mathbb{R}}
\providecommand{\Suppo}[1]{\mathcal{S}\left(#1\right)}
\providecommand{\pmin}{p_{\mathrm{min}}}
\providecommand{\pmax}{p_{\mathrm{max}}}
\DeclareMathOperator{\Id}{Id}
\numberwithin{equation}{section}
\providecommand{\gmax}{\gamma_{\mathrm{max}}}
\newtheorem{thm}{Theorem}
\newtheorem{corollary}{Corollary}
\newtheorem{lemma}{Lemma}
\newtheorem{assumption}{Assumption}
\providecommand{\gab}[1]{g_{\mathrm{ab}, #1}}
\providecommand{\gmb}[1]{g_{\mathrm{mb}, #1}}
\providecommand{\grc}[1]{g_{\mathrm{C}, #1}}
\algnewcommand\algorithmicparfor{\textbf{parallel for}}
\algnewcommand\algorithmicpardo{\textbf{do}}
\algnewcommand\algorithmicendparfor{\textbf{end\ parallel for}}
\begin{document}

\title{AdaBatch: Efficient Gradient  Aggregation Rules for Sequential and Parallel Stochastic Gradient Methods}
\author{\textbf{Alexandre Défossez} \\
Facebook AI Research\\
Paris, France\\
\texttt{defossez@fb.com} \\
 \and  \textbf{Francis Bach} \\
D\'epartement d'Informatique\\
\'Ecole Normale Sup\'erieure \\
 Paris, France \\
 \texttt{francis.bach@inria.fr} \\
}

\maketitle
\begin{abstract}
We study a new aggregation operator for gradients coming from a mini-batch for \emph{stochastic gradient} (SG) methods that allows a significant speed-up in the case of sparse optimization problems. We call this method AdaBatch and it only requires a few lines of code change compared to   regular mini-batch SGD algorithms.
We provide a theoretical insight to understand how this new class of algorithms is performing and show that it is equivalent to an implicit per-coordinate rescaling of the gradients, similarly to what Adagrad methods can do.
In theory and in practice, this new aggregation allows to keep the same sample efficiency of SG methods while increasing the batch size. Experimentally, we also show that in the case of smooth convex optimization, our procedure can even obtain a better loss when increasing the batch size for a fixed number of samples.
We then apply this new algorithm to obtain a parallelizable stochastic gradient method that is synchronous but allows speed-up on par with Hogwild! methods as convergence does not deteriorate with the increase of the batch size. The same approach can be used to make mini-batch provably efficient for variance-reduced SG methods such as SVRG.

\end{abstract}

\section{Introduction}

We  consider large-scale supervised learning with sparse features, such as logistic regression, least-mean-square or support vector machines, with a very large dimension as well as a very large number
of training samples with many zero elements, or even an infinite stream.
A typical example of such use of machine learning is given by Ads click prediction where many sparse features can be used to improve prediction on a problem with a massive online usage. For such problems, \emph{stochastic gradient} (SG) methods have been used successfully \cite{NIPS2007_3323,bach11,ads_from_trenches}.

Sparse optimization requires the usage of CPUs and unlike other domains in machine learning, it did not benefit much from the ever increasing parallelism accessible in GPUs or dedicated hardware. The frequency of CPUs has been stagnating and we can no longer rely on the increase of CPU sequential computational power for SG methods to scale with the increase of data \cite{multicore}.
New CPUs now rely on multi-core and sometimes multi-socket  design to offer more power to its users. As a consequence, many attemps have been made at parallelizing and distributing SG methods \cite{parallel_sgd_average,slow_learners_are_fast,Hogwild,passcode}. Those approaches can be classified in two types: (a) synchronous methods, that seek a speed-up while staying logically equivalent to a sequential algorithm,
    (b) asynchronous methods, which allow some differences and approximations from the sequential algorithms, such as allowing delays in gradient updates, dropping overlapping updates from different workers or allowing inconsistent reads from the model parameters.
 The latter such as Hogwild!~\cite{Hogwild} have been more successful as the synchronization overhead between workers from synchronous methods made them impractical.

Such methods however do not lead to a complete provability of convergence for step-sizes used in practice, as most proof methods require some approximation \cite{Hogwild,hogwild_mania}.
Proving convergence for such methods is not as straightforward as there is not anymore
a clear sequence of iterates that actually exist in memory and
conflicting writes to memory or inconsistent reads can occur. When increasing the number of
workers it is also likely to increase how stale a gradient update is when being processed.

Synchronous approaches rely mostly on the usage of mini-batches in order to parallelize the workload~\cite{online_minibatch}. Increasing the size of the mini-batches will lead to a reduction of the variance of the gradients and the overall estimator. However
for the same number of samples we will be doing $B$ times less iterations
where $B$ is the size of the mini-batch. In practice one has to increase the learning
rate (i.e., the step size) in order to compensate and achieve the same final accuracy as without mini-batches; however increasing the step size can lead to divergence and is sometimes impossible~\cite{minibatch}. The decrease in sample efficiency (i.e., a worse performance for a given number of processed training samples) is especially visible early
during optimization and will lower over time as the algorithm reaches an asymptotic regime where using mini-batches of size~$B$ will have the same sample efficiency
as without mini-batches.

In this paper, we make the following contributions:

\begin{itemize}

\item

We propose in \mysec{adabatch} a new merging operator for gradients computed over a mini-batch, to replace taking the average. Instead, for each mini-batch we count for each coordinate how many samples
had a non zero gradient in that direction. Rather than taking the sum of all gradients and dividing by $B$ we instead divide each coordinate independently by the number of times it was
non zero in the batch.
This happens to be equivalent to reconditioning the initial problem in order to exploit its sparsity. Because each coordinate is still
an average (albeit a stochastic one), the norm of the gradient
will stay under control. In order to notice this effect, one has to look at the problem in a different geometry that accounts for the sparsity of the data. We also draw a parallel with Adagrad-type methods \cite{adagrad,tonga} as our operator is equivalent to an implicit rescaling of the gradients per coordinate.

\item
We show in \mysec{convergence} that this new merging rule  outperforms regular mini-batch on sparse data and
that it can have the same if not an improved sample efficiency compared to regular SGD without mini-batch.

\item
We  explain in \mysec{wild} how this can be used to make synchronous parallel or distributed methods able to compete with asynchronous ones while being easier to study as they are logically equivalent to the sequential version.

\item
We extend our results to variance-reduced SG methods such as SVRG in \mysec{svrg} and show similar gains are obtained when using AdaBatch.

\item We  present in \mysec{exp} experimental results to support our theoretical claims as well
as a proof of concept that our new merging operator can make synchronous parallel SG methods
competitive with asynchronous ones. We also extend our experiments to variance reduction methods like SVRG and show that AdaBatch yields similar improvement as in the case of SG methods.

\end{itemize}

\textbf{Notations.} Throughout this paper, $\| \cdot \|$ denotes the Euclidean norm on~$\reel^d$ and for any symmetric positive definite matrix $D $, $\norm{\cdot}_D$  is  the norm defined by $D$ so that $\forall x \in \reel^d, \norm{x}_D^2 = x^T D x$; for a set $A$, $\abs{A}$  denotes the cardinality of $A$.
If $x \in \reel^d$ then $x^{(k)}$  denotes the $k$-th coordinate and $\Diag{x}$ is the $d\times d$ diagonal matrix with
the coefficient of $x$ on its diagonal. We define for any integer $n$, $[n] := \{1, 2, \ldots, n\}$.
Finally,  for any function $h : \reel^d \to \reel $, we will define the support of $h$ as
\begin{align}
\begin{split}
    \Suppo{h} = \{ k \in [d]: &\exists (x,y) \in \reel^d\times \reel^d,
    x^{(k)} \neq y^{(k)} \;\text{and}\; h(x) \neq h(y)\}.
\end{split}
\end{align}

\subsection{Problem setup}
\label{sec:setup}
We consider $f$ a random variable with values in the space of convex functions $\mathcal{F}$ from $\reel^d$ to $\reel$.
We define $F(w) := \esp{f(w)}$ and we wish to solve the optimization problem
\begin{equation}
    \label{eq:optim_problem_general}
    F_* = \min_{w \in \reel^d} F(w).
\end{equation}
It should be noted that the gradient $f'$ will only have non zero coordinate along the directions of the support $\Suppo{f}$ so that if the support of $f$ is sparse, so will the update for SG methods.
We define $p \in \reel^d$ by $\forall k \in [d], p^{(k)} := \prob{k \in \Suppo{f}}$. We take $\pmin:=\min(p)$ and $\pmax=\max(p)$.

This setup covers many practical cases, such as finite sum optimization where~$f$ would have
the uniform distribution over the sum elements or stochastic online learning where $f$ would be an infinite
stream of training samples.

One example of possible values for $f$ is given by linear predictions with sparse features. Let us assume $X$ is a random variable with values in $\reel^d$ and $\phi : \reel \rightarrow \reel$ a random
convex function.
Then one can take $f(w) := \phi(X^T w)$;
$\Suppo{f}$ would be the same as $\Suppo{X}$ defined as the non zero coordinates of the vector $X$.
The problem given by \eqref{eq:optim_problem_general} becomes
\begin{equation}
    \label{eq:optim_problem_linear}
    F_* = \min_{w\in \reel^d} \esp{\phi(X^T w)}.
\end{equation}
In the case of logistic regression,
one would have for instance $\phi(X^T w) = \log(1 + \exp{-Y X^T w })$ for $Y\in\{-1, 1\}$ the random label associated with the feature vector $X$.

The convergence properties of SG methods depend on the properties of the Hessian $F''$ of our objective function $F$,
as we will show in \mysec{convergence}. The closer it is to identity, the faster SG methods will converge
and this convergence will be as fast for all the coordinates of $w$. For example, in the case of sparse linear prediction such
as given by~\eqref{eq:optim_problem_linear}, with binary features $X^{(k)} \in \{0, 1\}$ that are uncorrelated, the Hessian is such that $F''(w) = \esp{\phi''(X^T w) X X^T}.$
If there exist $M$ and $m$ so that we have $\forall z \in \reel, m \leq \phi''(z) \leq M$, then we immediately have
\begin{align}
\label{linear_hessian_F}
\begin{split}
   \!\!\! \forall w\in \reel^d,\  &m (1 - \pmax) \Diag{p} \preceq F''(w), \\
        &F''(w) \preceq M \big( \textstyle 1 + \sum_{k\in [d]} p^{(k)}\big) \Diag{p}.
\end{split}
\end{align}
We notice here that we have a specific structure to the geometry of $F$ which depends on $p$ and which need to be taken
into account. The proof of~\eqref{linear_hessian_F} is given in the supplementary material (\mysecan{sparse_linear_prediction}).

Finally, we want not only to solve problems \eqref{eq:optim_problem_general} or \eqref{eq:optim_problem_linear}, but to be able to do so while using $W$ workers. Those workers can either be running on the same machine with shared memory or in a distributed fashion.

\subsection{Related work}

There have been several approaches for parallelizing or distributing SG methods.

\textbf{Parallelized stochastic gradient descent.}
This approach described by \cite{parallel_sgd_average} consists in splitting a dataset in $W$ different parts
and averaging the model obtained by $W$ independent workers. Model averaging always reduces
the variance of the final estimator but the impact on the bias is not as clear. For sparse optimization this approach does not
in practice outperform a purely sequential algorithm~\cite{Hogwild}.

\textbf{Delayed stochastic gradient descent.} The effect of delay for constant step-size stochastic gradient descent has been studied by \cite{slow_learners_are_fast}. Allowing for delay will remove the need for synchronization and thus limit the overhead
when parallelizing. The main result of \cite{slow_learners_are_fast} concludes that there is two different regimes. During the first phase,  delay will not help convergence, although once the asymptotic terms are dominating,  a theoretical linear speedup with the number of worker is recovered.

\textbf{Using mini-batches}
is a popular alternative for parallelizing or distributing SGD.
In \cite{online_minibatch}, the reduction of the variance of the gradient estimate is used to prove improvement in convergence. Our theoretical and practical results show that in the case of sparse learning, mini-batch do not offer an improvement during the first stage of optimization. We believe our merging rule is a simple modification of mini-batch SGD that can considerably improve convergence speed compared to regular mini-batch.

The case of averaged stochastic gradient descent with constant step size for least-squares regression has been
studied in much detail and in that case it is possible to get an explicit expression
for the convergence of the algorithm~\cite{minibatch}.
During the first phase of optimization, in order to achieve the same
accuracy after a given number of samples, the step size must be increased proportionally to the batch size which is possible up to a point after which the algorithm will diverge. We draw the same conclusions in a more generic case in \mysec{convergence}.

In \cite{minibatch_sgd}, a specific subproblem is solved instead of just averaging the gradients in a mini-batch. However, solving a subproblem is much more complex to put in place and requires the tuning of extra parameters.
Our method is very simple as it only requires a per-coordinate rescaling of the gradients and does not require any parameter tuning.

\textbf{Hogwild!} is a very simple parallel SG method. Each worker processes training examples completely in parallel, with no synchronization and accessing the same model in memory \cite{Hogwild}. The overhead is minimal, however the theoretical analysis
is complex.
New proof techniques have been introduced to tackle
those issues \cite{hogwild_mania}.
Our contribution here is to make synchronous methods almost as fast as Hogwild!. This has an interest for cases where Hogwild! cannot perform optimally, for instance with a mixture of dense and sparse features, or in the distributed setting where memory cannot be shared. Hogwild! has inspired parallel versions for SDCA, SVRG and SAGA \cite{passcode,hogwild_mania,asaga}. AdaBatch can similarly be extended to those algorithms and we provide proof for SVRG.
 \textbf{Cyclades} \cite{cyclades} builds on Hogwild!, assigning training samples
to specific workers using graph theory results to remove conflicts.

\textbf{Adagrad.} Adagrad~\cite{adagrad} performs a per coordinate rescaling dependent on the size of past gradients that has proven to be highly efficient
for sparse problem, besides it can be combined with Hogwild! for parallel optimization \cite{duchi2013estimation}. Adagrad rescaling is similar in nature to the one performed by AdaBatch. Adagrad has a step size going to 0 with the number of iterations which
gives  good convergence properties for various problems. On the other hand, AdaBatch works with a wider
range of methods such as SVRG. Constant step size has proven useful for least-mean-square problems \cite{bach13} or in the field of deep learning \cite{sutskever2013importance}.

\section{AdaBatch for SGD}
\label{sec:adabatch}

In this section, we will focus on constant step size stochastic gradient descent to give
an intuition on how AdaBatch works. AdaBatch can be extended in the same way to SVRG
(see~\mysec{svrg}).
We assume we are given a starting point $w_0 \in \reel^d$ and we define recursively
\begin{align}
\label{eq:recursion}
    \forall n > 0, w_{n} &= w_{n-1} - g_n,
\end{align}
for a sequence $g_n$ of stochastic gradient estimates based on independent gradients $f'_{n,1}, \dots,f'_{n,B}$. We define $\gmb{n}$ as
\begin{align}
\label{eq:definition_gmb}
    \forall k \in [d], \ \gmb{n}^{(k)} &= \frac{1}{B} \sum_{b : k\in \Suppo{f}} f'_{n,b}(w_{n-1})^{(k)} .
\end{align}
Plugging \eqref{eq:definition_gmb} into \eqref{eq:recursion} yields the regular SGD mini-batch
algorithm with constant step size $\gamma$ and batch size $B$.

For each iteration $n > 0$ and dimension $k \in [d]$,
we denote $D_{n,k} := \{b \in [B] : k \in \Suppo{f_{n, b}}\}$. We introduce $\gab{n}$, the gradient estimate of AdaBatch, as,  $\forall k \in [d]$,
\begin{align}
\label{eq:definition_gab}
    \gab{n}^{(k)}  &=
        \begin{cases}
            \frac{\sum_{b \in D_{n,k}} f'_{n,b}(w_{n-1})^{(k)} }{
                \abs{D_{n,k}}} \quad &\text{if $D_{n,k} \neq \varnothing$,}\\
            0 \quad &\text{otherwise}.
        \end{cases}
\end{align}
Instead of taking the average of all gradients, for each coordinate we make an average but taking only
the non zero gradients into account. Let us take a coordinate $k\in[d]$; if $p^{(k)}$ is close to 1, then
the AdaBatch update for this coordinate will be the same as with regular mini-batch with high probability.
On the other hand, if $p^{(k)}$ is close to zero, the update of AdaBatch will be close to summing the gradients
instead of averaging them. Adding updates instead of averaging has been shown to be beneficial
in previous work such as in CoCoa+~\cite{cocoa}, a distributed SDCA-inspired optimization
algorithm. We observed experimentally that in order to achieve the same performance
with mini-batch compared to AdaBatch, one has to take a step size that is proportional to $B$. This will boost convergence for less frequent features but can lead to divergence
when $B$ increases because the gradient for frequent features will get too large.
Our method allows to automatically and smoothly move from summing to averaging
depending on how frequent a feature is.

Let us consider the expectation for those two updates rules, we have to use the expectation of $\gmb{n}$ and $\gab{n}$. We have immediately $\esp{\gmb{n}} = F'(w_{n-1})$.
Thus when using the regular mini-batch update rule, one obtains an unbiased
estimate of the gradient. The main advantage of mini-batch is a reduction by a factor $B$
of the stochastic noise near the optimal value, as explained in \mysec{convergence}. With our new rule, using Lemma~1 from the supplementary material (with divisions of probabilites taken element-wise), we have:
\begin{align}
\label{eq:esp_gab}
    \esp{\gab{n}} &= \Diag{\frac{1-(1-p)^B}{p}} F'(w_{n-1}).
\end{align}
Interestingly, we now have a gradient that is equivalent to a reconditioning of $F'$.
We can draw here a parallel with Adagrad \cite{adagrad} which similarly uses per-coordinate step sizes. When using Adagrad, the update rule becomes
\begin{align*}
    w_{n} &= w_{n-1} - \gamma (  C_{n}^{\mathrm{adag}}  ) ^{-1} f'_n(w_{n-1})
 \mbox{ with }\\
    C_{n}^{\mathrm{adag}} &\!=\! \Diag{\alpha^{-1}\textstyle \sqrt{\epsilon + \sum_{i \in [n-1]} (f'_i(w_{i-1})^{(k)})^2}}_{k\in[d]}.
\end{align*}
The goal is to have an adaptative step size that will have a larger step size
for coordinate for which the gradients have a smaller magnitude.
One should note a few differences though:
\begin{itemize}
    \item Adagrad relies on past informations and updates the reconditioning at every iteration. It works without any particular requirement on the problem.
    Adagrad also forces a decaying step size. Although this give Adagrad  good convergence properties, it is not always suitable, for instance when using variance reduction methods such as SVRG where the step size is constant.
    \item On the other side, the AdaBatch scaling stays the same (in expectation) through time and only tries to exploit the structure
    coming from the sparsity of the problem. It does not require storing extra information
    and can be adapted to other algorithms such as SVRG or SAGA.
\end{itemize}
When used together with mini-batch, Adagrad will act similarly to AdaBatch
and maintain the same sample efficiency automatically even when increasing the batch size.

\paragraph{Deterministic preconditioning.} One can see that when $B\rightarrow \infty$, the reconditioning in \eqref{eq:esp_gab} goes to $\Diag{p}^{-1}$.
One could think of directly scaling the gradient by $\Diag{p}^{-1}$ to achieve a tighter bound in~\eqref{linear_hessian_F}.
However this would make the variance of the gradient explodes and thus is not usable in practice as shown in~\mysec{convergence}. A key feature of our update rule is stability; because every coordinate of $\gab{n}$ is an
average, there is no chance it can diverge.

Using directly $\Diag{p}^{-1}$ is not possible, however reconditioning by $C_{B, p} := \Diag{\frac{1-(1-p)^B}{p}}$
when using a mini-batch of size $B$ might just work as it will lead to the same expectation as~\eqref{eq:definition_gab}.
If we define $\grc{n} := C_{B, p} f'(w_{n-1})$, we immediately have $\esp{\grc{n}} = C_{B, p} F'(w_{n-1})$.
Intuitively, $C_{B, p}$ is getting us as close as possible to
the ideal reconditioning $\Diag{p}^{-1}$ leveraging the mini-batch size in order to keep the size of the gradients under control.
Both $\grc{n}$ and $\gab{n}$ allow to obtain a very similar performance both in theory and in practice,
so that which version to choose will depend on the specific task to solve.
If it is possible to precompute the probabilities~$p$ then one can use $\grc{n}$
which has the advantages of not requiring the extra step of counting the features
present in a batch. On the other hand, using $\gab{n}$ allows to automatically
perform the same reconditioning with no prior knowledge of $p$.

\section{Convergence results}
\label{sec:convergence}

We make the following assumptions which generalize our observation from \mysec{setup} for sparse linear prediction.
\begin{assumption}
\label{assumptions}
    We assume there exists  a convex compact set $\mathcal{D}\subset \reel^d$, $\mu$, $L$ and $R$ strictly positive so that   the following assumptions are satisfied.
    \begin{enumerate}
        \item The Hessians $F''$ (resp.~$f''$) of $F$ (resp.~$f$) are such that:

        \begin{equation}
        \begin{split}
            &\forall w \in \mathcal{D}, \ \  \mu \Diag{p} \preceq F''(w) \preceq L \Diag{p},
            \quad \text{and}\\
            &\forall w \in \mathcal{D},
            f''(w) \preceq R^2 \Id.\\[-.5cm]
        \end{split}
        \end{equation}
        \item Let $w_* := \argmin_{w \in \mathcal{D}} F(w)$,
        \begin{align}
        \label{eq:global_minimizer}
            &F'(w_*) = 0.
        \end{align}
        In particular, $w_*$ is a global minimizer of $F$ over $\reel^d$.
    \end{enumerate}
\end{assumption}

Those assumptions are easily met in the case of sparse linear predictions. If $f(w) := \phi(X^T w)$, with $\forall k\in[d], \  X^{(k)}\in \{0, 1\}$ uncorrelated, $\norm{X}^2 \leq G^2$ almost surely  and $m \leq \phi'' \leq M$, then the assumptions above are verified
for $L = M (1 + \sum_{k\in[d]}{p^{(k)}})$, $\mu = m (1 - \pmax)$, and $R^2 = G^2 M$. In the case of the logistic  loss, $M := 1/4$ and $\mu$ typically exist on any compact but is not explicitly available. Note though that it is not required to know $\mu$  in order to train any of the algorithms studied here. More details are given in the supplementary material (\mysecan{sparse_linear_prediction}).

Detailed proofs of the following results are given in the supplementary material (\mysecan{proof_adabatch_sgd}). Our proof technique is based on a variation from the one introduced by \cite{proof_sgd}.
It requires an extra projection step on $\mathcal{D}$. In practice however, we did not require it for any reasonable step size that does not make the algorithm diverge and our bounds do not depend on $\mathcal{D}$ because of~\eqref{eq:global_minimizer}.
The results are summarized in Table \ref{table:results}.
We use a constant step size as a convenience for comparing the different algorithms. It is different but equivalent to using a decreasing step size (see \cite{nesterov2008confidence} just before Corollary~1).
For instance, if we know the total number of iterations is $n \gg 1$, taking $\gamma = \frac{6 \log(n)}{n}$, the bias term is
approximately $\frac{\mu}{ 4 \log(n) n^2}$. The variance term which is proportional to $\gamma$ is a $O(\log(n)/n)$ which is the usual rate for strongly convex~SGD.

\begin{table*}
\begin{center}
  \setlength\extrarowheight{7pt}
\begin{tabular}{|l|ll|l|}
  \hline
  \textbf{Method} & {\boldmath $F_{N/B} - F_*$} & & {\boldmath $\gmax$} \\
  \hline
  Mini-batch & $(1 - \gamma \pmin \mu /2)^{N/B} \frac{\delta_0}{\gamma} $ &$+ \gamma \frac{2 \sigma^2}{B}$ & $\gamma \left[L \pmax + \frac{2 R^2}{B}\right] \leq 1$ \\
  \hline
  AdaBatch & $(1 - \gamma \pmin^{+B} \mu /2)^{N/B} \frac{\delta_0}{\gamma}$ &$+2 \gamma \sigma^2$ & $\gamma \left[L + 2 R^2\right] \leq 1$ \\
  \hline
  $\Diag{p}^{-1}$ &$(1 - \gamma \mu /2)^{N/B} \frac{\delta_0}{\gamma}$ &$+ \gamma\frac{2 \sigma^2}{\pmin B}$ & $\gamma \left[L + \frac{2 R^2}{\pmin B}\right] \leq 1$ \\
  \hline
\end{tabular}
\end{center}

\caption{Convergence rates for the different methods introduced in  \mysec{adabatch}. $N$ represents the total number of samples, so that the number of iterations is~$N / B$; $\gmax$ is the maximum step size that guarantees this convergence, $\sigma^2 := \esp{\norm{f'(w_*)}^2}$ the gradient variance at the optimum, and $\forall p\in[0, 1], p^{+B} := 1 - (1 - p)^B$. The $\Diag{p}^{-1}$ method consists
    in reconditioning by $\Diag{p}^{-1}$. Moreover, $\delta_0 := \norm{w_0 - w_*}_A^2$
    where $A = \Id$ for mini-batch SGD, $A = \Diag{p^{+B}/p}$ for AdaBatch
    and $A = \Diag{p}^{-1}$ for the last method.}
\label{table:results}
\end{table*}

\paragraph{Bias/variance decomposition.} We notice that the final error is made of two terms, one that decreases exponentially fast and measures how quickly we move away from the starting point and one that
is constant, proportional to~$\gamma$ and that depends on the stochastic noise around the optimal value. We will call the former the \emph{bias} term and the latter the \emph{variance} term, following the terminology introduced by~\cite{bach13}.
The bias term decreases exponentially fast and will be especially important during the early stage of optimization and
when $\mu$ is very small. The variance term is the asymptotically dominant term and
will prevail when close to the optimum.
In practical applications, the bias
term can be the most important one to optimize for~\cite{defossez}. This has also been observed for deep learning, where most of the optimization is spent far from the optimum~\cite{sutskever2013importance}.

We immediately notice that rescaling gradients by $\Diag{p}^{-1}$
is infeasible in practice unless $B$ is taken of the order of $\pmin^{-1}$, because of the exploding variance term and the tiny step size.

\paragraph{Mini-batch.} Let us now study the results for mini-batch SGD. The variance term is always improved by a factor of~$B$ if we keep the same step size. Most previous works on mini-batch
only studied this asymptotic term and concluded that because of this linear scaling,
mini-batch was efficient for parallel optimization \cite{online_minibatch}.
However, when increasing the batch size, the number of iterations is divided by $B$ but the exponential rate is still the same. The bias term will thus not converge
as fast unless we increase the step size. We can see two regimes depending on $B$.
If $B \ll \frac{2 R^2}{L \pmax}$, then the constraint is $\gamma \leq \frac{B}{2 R^2}$. Thus, we can scale $\gamma$ linearly and achieve the same convergence for both the variance and bias term as when $B=1$.
However, if $B \gg \frac{2 R^2}{L \pmax}$, then the constraint is $\gamma \leq \frac{1}{L \pmax}$. In this regime, it is not possible to scale up infinitely $\gamma$ and thus it is not possible to achieve the same convergence for the bias term as when $B=1$. We have observed this in practice on some datasets.

\paragraph{AdaBatch.} With AdaBatch though, if $\pmin \ll 1$, then $1 - (1-\pmin)^B \approx B \pmin$. In such case, the bias term is
$(1 - \gamma B \pmin \mu /2)^{N / B} \frac{\delta_0}{\gamma} \approx (1-\gamma \pmin \mu /2) ^ {N} \frac{\delta_0}{\gamma}$,
thus showing that we can achieve the same convergence speed for a given number of samples as when $B = 1$ as far as the bias term is concerned.
The variance term and the maximum step size are exactly the same as when $B=1$. Thus, as long as
$1 - (1-\pmin)^B \approx B \pmin$, AdaBatch is able to achieve at least the same sample efficiency as when $B=1$. This is a worst case scenario, in practice we observe on some datasets an
improved efficiency when increasing $B$, see~\mysec{exp}. Indeed for rare features
the variance of the gradient estimate will not be decreased with larger batch-size, however
for features that are likely to appear more than once in a batch, AdaBatch will
still obtain partial variance reduction through averaging more than one gradient. Although
we do not provide the full proof of this fact, this is a consequence of Lemma 2 from the supplementary material.

\section{Wild AdaBatch}
\label{sec:wild}
We now have a SG method trick that allows us to increase the size of mini-batches
while retaining the same sample efficiency.
Intuitively one can think of sample efficiency as how much information
we extract from each training example we process i.e. how much the loss will decrease
after a given number of samples have been processed.
Although it is easy to increase the number of samples processed per seconds
when doing parallel optimization, this will only lead to a true speedup
if we can retain the same sample efficiency as sequential SGD. If the sample efficiency
get worse, for instance when using regular mini-batch, then we will have to perform
more iterations to reach the same accuracy, potentially canceling out the gain obtained
from parallelization.

When using synchronous parallel SG methods such as \cite{online_minibatch}, using large mini-batches allows to reduce the overhead and thus increase the number of samples processed per second. SGD with mini-batches typically suffers from a lower sample efficiency when $B$ increases. It has been shown to be asymptotically optimal \cite{minibatch_sgd,online_minibatch}, however our results summarized in \mysec{convergence} show that in cases where the step size cannot be taken too large, it will not be able to achieve the same sample efficiency as SGD without mini-batch.

We have shown in~\mysec{convergence} that for sparse linear prediction, AdaBatch can achieve the same sample efficiency as for $B=1$. Therefore, we believe it is a better
candidate than regular mini-batch for parallel SGD. In \mysec{exp}, we will present our experimental results
for Wild AdaBatch, a Hogwild! inspired, synchronous SGD algorithm. Given a batch size $B$ and $W$ workers,
they will first compute in parallel $B$ gradients, wait for everyone to be done and then apply the updates in parallel to update $w_n$. The key advantage here is that thanks to the synchronization, analysis of this algorithm is easier. We have a clear sequence of iterates $w_n$ in memory and there is no delay or inconsistent read. It is still possible that during the update phase, some updates will be dropped because of overlapping writes to memory, but that can be seen simply as a slight decrease of the step size for those coordinate. In practice, we did not notice any difference with the sequential version of AdaBatch.

\section{AdaBatch for SVRG}
\label{sec:svrg}

SVRG~\cite{svrg} is a variance-reduced SG
method that has a linear rate of convergence on the training error when
$F$ is given by a finite mean of functions $F := \frac{1}{N} \sum_{i \in [N]} f_i$. This is equivalent to $f$ following the uniform law over the set $\{ f_1, \ldots, f_N\}$.
SVRG is able to converge with a constant step size. To do so, it replaces the gradient $f'(w)$ by $f'(w) - f'(y) + F'(y)$
where $y$ is updated every epoch (an epoch being $m$ iterations where $m$
is a parameter to the algorithm, typically of the order of the number of training samples).
Next, we show the difference between regular mini-batch SVRG and AdaBatch SVRG
and give theoretical results showing improved convergence for the latter.

We now only assume that $F$ verifies the following inequalities for $\mu > 0$, almost surely,
\begin{equation}
\label{eq:hessian_svrg}
    \forall w \in \reel^d, \ \  \mu\Diag{p}\preceq F''(w) \  \text{and} \   f(w) \preceq L\Diag{p}.
\end{equation}

Let us take
a starting point $y_0 \in \reel^d$ and $m \in \mathbb{N}^*$. For all $s = 0, 1, \ldots$, we have
$w_{s, 0} := y_s$ and for all $n \in [m]$ let us define
\begin{align*}
    w_{s, n} := w_{s, n-1} - \gamma g_{s,n}, \quad y_{s + 1} := \frac{1}{m} \sum_{n\in[m]} w_{s, n},
\end{align*}
with $g_{s,n}$ the SVRG update based on $(f_{s,n,b})_{b\in[B]}$ i.i.d.~samples of $f$.
Let us introduce
    \[
    \forall k \in [d], D^{(k)}_{s, n} := \left\{b \in [B]: k \in \Suppo{f_{s,n,b}'}\right\}.
    \]
 For any dimension $k$ such that $D^{(k)}_{s, n}\neq \varnothing$ we have
\begin{align*}
g_{s,n}^{(k)} := \frac{1}{C_{s,n}^{(k)}}\Big(
\sum\limits_{b \in D^{(k)}_{s,n}} &f_{s,n,b}'(w_{s, n-1})^{(k)} - f_{s,n,b}'(y_s)^{(k)}
 + F'(y_s)^{(k)} / p^{(k)}\Big),
\end{align*}
and for $D^{(k)}_{s, n}= \varnothing$ we take $g_{s,n}^{(k)} := 0$.
Regular mini-batch SVRG is recovered for $C_{s,n}^{(k)} := B$.
On the other hand, AdaBatch SVRG is obtained for $C_{s,n}^{(k)} := \abs{D^{(k)}_{s,n}}$.
One can note that we
used a similar trick to the one in~\cite{hogwild_mania} in order to preserve
the sparsity of the updates.

For both updates, there exists a choice of $\gamma$ and $m$ such that \begin{align*}
    \esp{F(y_s) - F_*} \leq 0.9^s (F(y_0) - F_*).
\end{align*}
For regular mini-batch, it is provably sufficient to take
$\gamma_{\textrm{mb}} = \frac{1}{L}$ and $m_{\textrm{mb}} \approx \frac{2.2 L}{\pmin \mu}$.
For AdaBatch, we have
$\gamma_{\textrm{ab}} = \frac{1}{10 L}$ and $m_{\textrm{ab}} \approx \frac{20 L}{B \pmin \mu}$.
We notice that as we increase the batch size, we require the same number of inner iterations
when using regular mini-batch update. However, each update requires $B$ times more
samples as when $B = 1$. On the other hand when using AdaBatch, $m_{\textrm{ab}}$ is
inversely proportional to $B$ so that the total number of samples required to reach the same accuracy
is the same as when $B  = 1$.
We provide detailed results and proofs in the supplementary material (\mysecan{proof_adabatch_svrg}). Note that similar results should hold for epoch-free variance-reduced SG methods such as SAGA~\cite{defazio2014saga}.

\section{Experimental results}
\label{sec:exp}

We have implemented both Hogwild! and Wild AdaBatch and compared them on three datasets, \emph{spam}\footnote{\url{http://plg.uwaterloo.ca/~gvcormac/trecspamtrack05/trecspam05paper.pdf}},
 \emph{news20}\footnote{\url{https://www.csie.ntu.edu.tw/~cjlin/libsvmtools/datasets/binary.html}},
 and \emph{url} \cite{url}.
\emph{Spam} has 92,189 samples of dimension 823,470 and an average of 155 active features per sample. \emph{News20} has 19,996 samples of dimension 1,355,191 and an average of 455 active features per example. Finally, \emph{url} has 2,396,130 samples of dimension 3,231,961 and an average of 115 active features per example.

We implemented both in C++ and tried our best to optimize both methods. We ran them on an Intel(R) Xeon(R) CPU E5-2680 v3 @ 2.50GHz
with 24 CPUs divided in two sockets. Each socket contains 12 physical CPUs for 24 virtual ones. In our experiments though, it is better to keep the number of threads under the number of actual physical CPUs on a single socket. We restricted each experiment to run on a single socket in order to prevent NUMA (non uniform memory access) issues.
For all the experiments we ran (and all methods), we perform a grid search to optimize the step size (or the main parameter of Adagrad).
We then report the test error on a separate test set.

\textbf{Wild AdaBatch.}
 We trained each algorithm for logistic regression with 5 passes over the dataset, 1 pass only in the case of \emph{url}. We normalized the features so that each sample has norm 1.
For each dataset, we evaluate 3 methods: Wild AdaBatch (AB) and Wild mini-batch SGD (MB, the same as AB but with the regular average of the gradients) as well as Hogwild! (HW) for various numbers of workers $W$. For AdaBatch and mini-batch SGD, the batch size is set to $B = 10 W$ for \emph{news20} and $B = 50$ for \emph{url} and \emph{spam} which was giving a better speedup for those datasets. We take $W$ going from $1$ to $12$, which is the maximum number of physical CPUs on a single socket on our machine.
We also evaluate purely sequential SGD without mini-batch (SEQ).

We only present here the result for \emph{news20}. The figures for the other datasets can be found in the supplementary material~\mysecan{experimental_results}.
In Figure~\ref{news20:conv:nonstrict} we show the convergence as a function
of the wall-clock time.
In Figure~\ref{time_to_target_ips}, we give the wall-clock time to reach a given test error (where our method is achieving close to a linear speed-up) and the number of processed samples per seconds. On \emph{news20}, the gain in sample efficiency actually allows Wild AdaBatch to reach the goal the fastest, even though Wild AdaBatch can process
less samples per seconds than Hogwild!, thanks to its improved sampled efficiency.

\textbf{Comparison with Adagrad.}
We compare AdaBatch with Adagrad for various batch-size on Figure~\ref{figure:adagrad} when trained with a fixed number of samples, so that when $B$ increases, we perform less iterations.
On the \emph{url} dataset, Adagrad performs significantly better than AdaBatch, however we notice that as the batch-size
increases, the gap between AdaBatch and Adagrad reduces.
On the \emph{spam} dataset with the least-mean-square loss, constant step size SGD performs better
than Adagrad. We believe this is because Adagrad is especially well suited for non strictly convex problems.
For strictly convex problem though, constant step size SGD is known to be very efficient~\cite{bach13}.
We also plotted the performance of constant step size regular mini-batch SGD. In all cases, regular mini-batch
scales very badly as the batch size increases. We fine tune the step size for each batch size and observed
the regular mini-batch will take a larger step size for small batch sizes, that allows to keep roughly
the same final test error. However, when the batch size increases too much, this is no longer possible as it makes
optimization particularly unstable, thus leading to a clear decrease in sample efficiency.
Finally, AdaBatch can even improve the sample complexity when increasing $B$. We believe this comes from the variance reduction of the gradient for features that occurs more than once in a mini-batch, which in turn allows for a larger
step size.

\textbf{SVRG.}
We also compared the effect of AdaBatch on SVRG. On Figure~\ref{figure:svrg} we
show the training gap $F_N - F^*$ on \emph{url} for the log loss with a small L2 penalty. This penalty is given by $\frac{10^{-4}}{2} \norm{w}_{\diag{p}}^2$, chosen to respect our hypothesis and to
 prevent overfitting without degrading the testing error. All the models are trained with 10 iterations over all the samples in the dataset, so that if $B$ is larger, the model will perform less iterations.
We observe that as we increase the batch size, the sample efficiency of regular mini-batch deteriorates. The variance reduction coming from using a larger mini-batch is not sufficient to compensate the fact that we perform less iterations, even when we increase the step size. On the other hand, AdaBatch actually allows to improve the sample efficiency as it allows to take a larger step size thanks to the gradient variance reduction for the coordinates with $p^{(k)} B$ large enough. To the best of ourknowledge, AdaBatch is the first
mini-batch aggregation rule that is both extremely simple and allows for
a better sample efficiency than sequential SGD.

\begin{figure}
\centering
\begin{minipage}[t]{.45\textwidth}
    \centering
  \begin{tikzpicture}[
    ab/.style={color=blue, solid,mark options={fill=blue}},
    mb/.style={color=red, solid,mark options={fill=red}},
    seq/.style={color=green,solid,mark options={fill=green}},
    hw/.style={color=black, solid,mark options={fill=black}},
    every node/.style={font=\fontsize{8}{5}\selectfont}
  ]
    \begin{loglogaxis}[
        xlabel={time (in sec)},
        ylabel={test error},
        name=plot1,
        yticklabel pos=left,
        legend entries={
            {{AB $W=2$},
             {AB $W=6$},
             {MB $W=2$},
             {MB $W=6$},
             {SEQ $W=1$},
             {HW $W=2$},
             {HW $W=6$},
             }
        },
        legend style={
            font=\fontsize{5}{5}\selectfont,at={(0.5,1.02)},anchor=south,
            legend columns=3},
        style={font=\footnotesize},
        width=\linewidth,
        skip coords between index={0}{1}]
        \def\myroot{tables__parallel__news20}

        \pgfplotstableread{\myroot__convergence_depth_total_primal_test_group=ab=10,2.table}\tablea
        \addplot+[ab,mark=*] table[x=x, y=y, y error=y_error] {\tablea};

        \pgfplotstableread{\myroot__convergence_depth_total_primal_test_group=ab=10,6.table}\tablea
        \addplot+[ab, mark=diamond*] table[x=x, y=y, y error=y_error] {\tablea};

        \pgfplotstableread{\myroot__convergence_depth_total_primal_test_group=mb=10,2.table}\tablea
        \addplot+[mb,mark=*] table[x=x, y=y, y error=y_error] {\tablea};

        \pgfplotstableread{\myroot__convergence_depth_total_primal_test_group=mb=10,6.table}\tablea
        \addplot+[mb, mark=diamond*] table[x=x, y=y, y error=y_error] {\tablea};

                \pgfplotstableread{\myroot__convergence_depth_total_primal_test_group=hw=20,1.table}\tablea
        \addplot+[seq, mark=square*] table[x=x, y=y, y error=y_error] {\tablea};

        \pgfplotstableread{\myroot__convergence_depth_total_primal_test_group=hw=20,2.table}\tablea
        \addplot+[hw,mark=*] table[x=x, y=y, y error=y_error] {\tablea};

        \pgfplotstableread{\myroot__convergence_depth_total_primal_test_group=hw=20,6.table}\tablea
        \addplot+[hw,mark=diamond*] table[x=x, y=y, y error=y_error] {\tablea};
    \end{loglogaxis}
\end{tikzpicture}
\captionof{figure}{\label{news20:conv:nonstrict}Convergence result for \emph{news20}.
The error is given  as a function of the wall-clock time.}
\end{minipage}\hfill%
\begin{minipage}[t]{.45\textwidth}
\centering
\begin{tikzpicture}[
    ab/.style={color=blue, solid,mark options={fill=blue}},
    mb/.style={color=red, solid,mark options={fill=red}},
    seq/.style={color=green,solid,mark options={fill=green}},
    hw/.style={color=black, solid,mark options={fill=black}},
    every node/.style={font=\fontsize{8}{5}\selectfont}
  ]
    \begin{semilogyaxis}[
        xlabel={training samples},
        ylabel={training gap $F_N - F^*$},
        name=plot1,
        legend entries={
            {
            {AB/MB $B=1$},
            {AB $B=10$},
            {AB $B=5000$},
            {MB $B=10$},
            {MB $B=5000$},
             }
        },
        legend style={
            font=\fontsize{7}{5}\selectfont,at={(0.5,1.02)},anchor=south,
            legend columns=3},
        style={font=\footnotesize},
        width=\linewidth,
        skip coords between index={12}{14}]
        \newcommand{\myplot}[1]{%
        \pgfplotstableread{tables__svrg__url__convergence_total_gap_group=#1.table}\tablea}

        \myplot{ab=1}
        \addplot+[hw,mark=*] table[x=x, y=y, y error=y_error] {\tablea};
        \myplot{ab=10}
        \addplot+[ab,mark=diamond*] table[x=x, y=y, y error=y_error] {\tablea};
        \myplot{ab=5000}
        \addplot+[ab,mark=square*] table[x=x, y=y, y error=y_error] {\tablea};

        \myplot{mb=10}
        \addplot+[mb,mark=diamond*] table[x=x, y=y, y error=y_error] {\tablea};
        \myplot{mb=5000}
        \addplot+[mb,mark=square*] table[x=x, y=y, y error=y_error] {\tablea};
    \end{semilogyaxis}
\end{tikzpicture}
\captionof{figure}{\label{figure:svrg} Comparison of the training gap $F_n - F_*$ for regular mini-batch vs. AdaBatch
with SVRG on \emph{url} for the log loss with an L2 penalty of $\frac{10^{-4}}{2} \norm{w}_{\diag{p}}^2$.}
\end{minipage}
\end{figure}

\begin{figure}
\centering
\begin{minipage}[t]{0.45\textwidth}
\centering
\begin{tikzpicture}[
    ab/.style={color=blue, solid,mark options={fill=blue}},
    mb/.style={color=red, solid,mark options={fill=red}},
    seq/.style={color=green,solid,mark options={fill=green}},
    hw/.style={color=black, solid,mark options={fill=black}},
    every node/.style={font=\fontsize{8}{5}\selectfont}
  ]
    \begin{axis}[
        xlabel={$W$},
        ylabel={time (in sec)},
        name=plot1,
        legend entries={
            {{AB},
             {AB ideal},
             {MB},
             {HW},
             {HW ideal}
             }
        },
        legend style={
            font=\fontsize{7}{5}\selectfont,at={(0.5,1.2)},anchor=north,
            legend columns=3},
        style={font=\footnotesize},
        width=\linewidth]
        \def\myroot{tables__parallel__news20}

        \pgfplotstableread{\myroot__time_to_target_group=ab=10.table}\tablea
        \addplot+[ab,mark=square*] table[x=x, y=y, y error=y_error] {\tablea};
        \addplot[ab,dashed] table[x=x, y expr=0.080099/\thisrow{x}, y error=y_error] {\tablea};

        \pgfplotstableread{\myroot__time_to_target_group=mb=10.table}\tablea
        \addplot+[mb, mark=diamond*] table[x=x, y=y, y error=y_error] {\tablea};

        \pgfplotstableread{\myroot__time_to_target_group=hw=20.table}\tablea
        \addplot+[hw,mark=*] table[x=x, y=y, y error=y_error] {\tablea};
        \addplot[hw,dashed] table[x=x, y expr=0.119336/\thisrow{x}, y error=y_error] {\tablea};
    \end{axis}
\end{tikzpicture}
\end{minipage}\hfill%
\begin{minipage}[t]{.45\textwidth}
\begin{tikzpicture}[
    ab/.style={color=blue, solid,mark options={fill=blue}},
    mb/.style={color=red, solid,mark options={fill=red}},
    seq/.style={color=green,solid,mark options={fill=green}},
    hw/.style={color=black, solid,mark options={fill=black}},
    every node/.style={font=\fontsize{8}{5}\selectfont}
  ]
    \begin{axis}[
        xlabel={$W$},
        ylabel={samples / sec},
        legend entries={
            {{AB},
             {MB},
             {HW},
             }
        },
        legend style={
            font=\fontsize{7}{5}\selectfont,at={(0.05,0.95)},anchor=north west,
            legend columns=3},
        style={font=\footnotesize},
        width=\linewidth]
        \pgfplotstableread{tables__parallel__news20__speedup_group=ab=10.table}\tablea
        \addplot+[ab,mark=square*] table[x=x, y=y, y error=y_error] {\tablea};
        \pgfplotstableread{tables__parallel__news20__speedup_group=ab=10.table}\tablea

        \pgfplotstableread{tables__parallel__news20__speedup_group=mb=10.table}\tablea
        \addplot+[mb, mark=diamond*] table[x=x, y=y, y error=y_error] {\tablea};

        \pgfplotstableread{tables__parallel__news20__speedup_group=hw=20.table}\tablea
        \addplot+[hw,mark=*] table[x=x, y=y, y error=y_error] {\tablea};
    \end{axis}
\end{tikzpicture}
\end{minipage}
\caption{\label{time_to_target_ips} \textbf{Left}: time to achieve a given test error when varying
the number of workers on \emph{news20}. The dashed line is the ideal speedup.
\textbf{Right}: number of process sampled per second as a function of $W$.}
\end{figure}

\begin{figure}
\centering
\begin{minipage}[t]{0.45\textwidth}
\centering
\begin{tikzpicture}[
    ab/.style={color=blue, solid,mark options={fill=blue}},
    mb/.style={color=red, solid,mark options={fill=red}},
    seq/.style={color=green,solid,mark options={fill=green}},
    hw/.style={color=black, solid,mark options={fill=black}},
    every node/.style={font=\fontsize{8}{5}\selectfont}
  ]
    \begin{axis}[
        xlabel={$B$},
        ylabel={final test error},
        name=plot1,
        legend entries={
            {{AB},
             {MB},
             {AG}
             }
        },
        legend style={
            font=\fontsize{7}{5}\selectfont,at={(0.5,0.95)},anchor=north,
            legend columns=3},
        style={font=\footnotesize},
        width=\linewidth]
        \def\myroot{tables__adagrad__url}

        \pgfplotstableread{\myroot__final_primal_validate_group=ab.table}\tablea
        \addplot+[ab,mark=square*] table[x=x, y=y, y error=y_error] {\tablea};

        \pgfplotstableread{\myroot__final_primal_validate_group=mb.table}\tablea
        \addplot+[mb, mark=diamond*] table[x=x, y=y, y error=y_error] {\tablea};

        \pgfplotstableread{\myroot__final_primal_validate_group=mb_ag.table}\tablea
        \addplot+[hw, mark=*] table[x=x, y=y, y error=y_error] {\tablea};

    \end{axis}
\end{tikzpicture}
\end{minipage}\hfill%
\begin{minipage}[t]{.45\textwidth}
\begin{tikzpicture}[
    ab/.style={color=blue, solid,mark options={fill=blue}},
    mb/.style={color=red, solid,mark options={fill=red}},
    seq/.style={color=green,solid,mark options={fill=green}},
    hw/.style={color=black, solid,mark options={fill=black}},
    every node/.style={font=\fontsize{8}{5}\selectfont}
  ]
    \begin{axis}[
        xlabel={$B$},
        ylabel={final test error},
        legend entries={
            {{AB},
             {MB},
             {AG}
             }
        },
        legend style={
            font=\fontsize{7}{5}\selectfont,at={(0.5,0.95)},anchor=north,
            legend columns=3},
        style={font=\footnotesize},
        at=(plot1.below south west), anchor=above north west,
        width=\linewidth]
        \def\myroot{tables__adagrad__spam}

        \pgfplotstableread{\myroot__final_primal_validate_group=ab.table}\tablea
        \addplot+[ab,mark=square*] table[x=x, y=y, y error=y_error] {\tablea};

        \pgfplotstableread{\myroot__final_primal_validate_group=mb.table}\tablea
        \addplot+[mb, mark=diamond*] table[x=x, y=y, y error=y_error] {\tablea};

        \pgfplotstableread{\myroot__final_primal_validate_group=mb_ag.table}\tablea
        \addplot+[hw, mark=*] table[x=x, y=y, y error=y_error] {\tablea};
    \end{axis}
\end{tikzpicture}
\end{minipage}
\caption{\label{figure:adagrad} Comparison of Adagrad (AG), AdaBatch (AB) and regular mini-batch (MB). \textbf{Left}: on \emph{url}
for the log loss; \textbf{right}: on \emph{spam} for the least-mean-square loss. We plot the final
test error after the same number of samples (larger batches means less iterations).}
\end{figure}

\section{Conclusion}

We have introduced a new way of merging gradients when using SG methods with mini-batches. We have shown both theoretically and experimentally
that this approach allows to keep the same sample efficiency as when not using any mini-batch and sometimes even improve it.
Thanks to this feature, AdaBatch allowed us to make synchronous parallel SG methods competitive with Hogwild!. Our approach can extend to any SG methods including variance-reduced methods. Although not explored  yet, we also believe that AdaBatch is promising for distributed optimization. In such a case, memory is no longer shared so that Hogwild! cannot be used. Distributed mini-batch or SGD with delay have been used in such case \cite{online_minibatch,slow_learners_are_fast}; AdaBatch is a few line change for distributed mini-batch which could vastly improve the convergence of those methods.

\paragraph{Acknowledgements.}
  We thank Nicolas Flammarion, Timothée Lacroix, Nicolas Usunier and Léon Bottou for interesting discussions related to this work. We acknowledge support from the European Research Council (SEQUOIA project 724063).

\clearpage

\newcommand{\ansection}{\subsection}
\newcommand{\ansubsection}{\subsubsection}
\renewcommand{\thesubsection}{\Alph{subsection}}

\section*{Supplementary material}
\ansection*{Introduction}

We present in \mysecan{algorithms} the pseudocode for AdaBatch and Wild AdaBatch.
In \mysecan{exp_var_adabatch}, we give two lemma from which we can derive the expectation
and variance of the AdaBatch gradient update. In \mysecan{proof_adabatch_sgd}
we study the convergence of regular mini-batch and AdaBatch for SGD as well
as the convergence of reconditionned SGD. In \mysecan{proof_adabatch_svrg}
we compare the convergence of regular mini-batch and AdaBatch for SVRG. Finally in \mysecan{experimental_results} we give convergence plots for Wild AdaBatch and SVRG on the remaining
datasets that were not included in the main paper.

\ansection{Algorithms}
\label{an:sec:algorithms}

We present two possible uses of AdaBatch. Algorithm~\ref{an:algo:adabatch} counts
for each mini-batch the number of time each feature is non zero and use that to
recondition the gradient. This is the algorithm that we study in Section 2 of the main paper.

Algorithm~\ref{an:algo:parallel_adabatch} is an Hogwild!
inspired synchronous SGD method that we introduce in Section 4 of the main paper.
Instead of counting the features, we directly
use the reconditioning $\frac{1-(1-p^{(k)})^B}{p}$ where $B$ is the batch size
and $\forall k \in [d], p^{(k)} = \prob{k \in \Suppo{f}}$ i.e., the probability
that feature $k$ is active in a random training sample.
We prove in section~\ref{an:subsection:reconditioned_sgd} that this reconditioning
benefit from the same
convergence speed as regular AdaBatch and does not require to keep
count of the features which is easier to implement in the parallel setting, although
it requires to precompute the probabilities $p^{(k)}$.

\begin{algorithm}[h]
\caption{AdaBatch}
\label{an:algo:adabatch}
\begin{algorithmic}
    \Function{AdaBatch}{$w_0, N, B, \gamma, f$}
    \For{$n \in [N]$}
        \For{$b \in [B]$}
            \State{Sample $f_{n,b}$ from the distribution of $f$}
            \State{Compute $f'_{n,b}(w)$}
        \EndFor
        \For{$b \in [B]$}
            \For{$k \in \Suppo{f_{n,b}}$}
                \State{\begin{equation}
                \label{an:algo:line:update_rule}
                w_n^{(k)} \gets w_{n-1}^{(k)}- \gamma \frac{f'_{n,b}(w_{n-1})^{(k)}}{
                    \abs{\{b : k \in \Suppo{f_{n, b}}\}}}
                \end{equation}}
            \EndFor
        \EndFor
    \EndFor
    \EndFunction
\end{algorithmic}
\end{algorithm}

\begin{algorithm}[h]
\caption{Wild AdaBatch}
\label{an:algo:parallel_adabatch}
\begin{algorithmic}
    \Function{Wild AdaBatch}{$w_0, N, B, \gamma, p, f$}
    \For{$n \in [N]$}
        \ParFor{$b \in [B]$}
            \State{Sample $f_{n,b}$ from the distribution of $f$}
            \State{Compute $f'_{n,b}(w)$}
        \EndParFor
        \ParFor{$b \in [B]$}
            \For{$k \in \Suppo{f_{n,b}}$}
                \State{\begin{equation}
                w_n^{(k)} \gets w_{n-1}^{(k)}- \frac{\gamma}{B}\frac{1-(1-p^{(k)})^B}{p} f'_{n,b}(w_{n-1})^{(k)}
                \end{equation}}
            \EndFor
        \EndParFor
    \EndFor
    \EndFunction
\end{algorithmic}
\end{algorithm}

\ansection{Expectation and variance of the AdaBatch update}
\label{an:sec:exp_var_adabatch}

The AdaBatch update $\gab{n}$ is defined as
\begin{align}
\label{an:eq:definition_gab}
    \forall k \in [d], \gab{n}^{(k)}  &=
        \begin{cases}
            \frac{\sum_{b : k\in \Suppo{f}} f'_{n,b}(w_{n-1})^{(k)} }{
                \sum_{b : k\in \Suppo{f}} 1} \quad &\text{if $\sum_{b : k\in \Suppo{f}} 1 \neq 0$}\\
            0 \quad &\text{otherwise}.
        \end{cases}
\end{align}
so that we have the recurrence rule for SGD
\begin{align*}
    w_{n} = w_{n-1} -\gamma \gab{n}.
\end{align*}
$\gab{n}$ is a per coordinate stochastic average of only the subset
of the gradients which have a non zero coordinate in that direction.
We will need the following Lemma in order to get the expected value and the variance of $\gab{n}$.
\begin{lemma}
\label{an:lemma:esp_var}
    Let $Z$, $(Z_i)_{i\in [N]}$ $N$ i.i.d.~random variables with value in $\reel$
    for which the set $\{0\}$ is measurable with $p = \prob{Z \neq 0} > 0$, and $A \in \reel$ a random variable
    defined as
    \begin{align*}
        \begin{cases}
            A := 0 \quad \text{if}\quad \forall i \in [N], Z_i = 0\\
            A := \frac{\sum_{i\in [N]} Z_i}{\sum_{i \in [N]: Z_i !=0} 1} \quad \text{otherwise.}
        \end{cases}.
    \end{align*}
    We have
  \begin{equation}
    \label{an:eq:esp_sum_z}
    \esp{A} = \frac{1 - (1- p)^N}{p}\esp{Z},
  \end{equation}\break
  \begin{align}
  \label{an:eq:var_sum_z}
    \esp{A^2} &=
        \frac{(1 - (1 - p)^N)^2}{p^2}\esp{Z}^2
            + \left(
            \sum_{i \in [N]} \binom{N}{i} p^{i} (1-p)^{N - i} \frac{1}{i}
            \right) \left(\frac{\esp{Z^2}}{p} - \frac{\esp{Z}^2}{p^2}\right)\\
    \label{an:eq:var_sum_z_simple}
    &\leq \frac{(1 - (1 - p)^N)^2}{p^2}\esp{Z}^2 + \frac{(1 - (1- p)^N)}{p}
    \esp{Z^2}.
  \end{align}
\end{lemma}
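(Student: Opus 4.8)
The natural route is to condition on how many of the $N$ samples are nonzero. The plan is to set $K := \sum_{i\in[N]} \mathbbm{1}\{Z_i \neq 0\}$, which is $\mathrm{Binomial}(N,p)$ because the $Z_i$ are i.i.d.\ and $\{0\}$ is measurable, and to observe that conditionally on $\{K = k\}$ with $k \geq 1$ the $k$ nonzero entries are i.i.d.\ draws from the conditional law of $Z$ given $Z \neq 0$. Writing $\tilde Z$ for a variable with that law, the defining ratio collapses to a genuine empirical mean $A = \frac{1}{k}\sum_{j=1}^{k}\tilde Z_j$, while $A = 0$ on $\{K = 0\}$. Before anything else I would record the two identities $\esp{\tilde Z} = \esp{Z}/p$ and $\esp{\tilde Z^2} = \esp{Z^2}/p$, which hold because $Z\,\mathbbm{1}\{Z\neq 0\} = Z$ almost surely, so that $\esp{Z} = p\,\esp{\tilde Z}$ and likewise for the second moment.

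For \eqref{an:eq:esp_sum_z} I would then apply the tower rule: $\esp{A \mid K=k} = \esp{\tilde Z} = \esp{Z}/p$ for every $k \geq 1$ and $\esp{A \mid K=0} = 0$, so that $\esp{A} = (\esp{Z}/p)\,\prob{K \geq 1} = \frac{1-(1-p)^N}{p}\esp{Z}$, using $\prob{K \geq 1} = 1-(1-p)^N$.

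For the second moment I would compute, for $k \geq 1$, the mean square of a $k$-sample empirical average, $\esp{A^2 \mid K=k} = \frac{1}{k}\esp{\tilde Z^2} + \frac{k-1}{k}\esp{\tilde Z}^2 = \frac{\esp{Z}^2}{p^2} + \frac{1}{k}(\frac{\esp{Z^2}}{p} - \frac{\esp{Z}^2}{p^2})$, and then average against the binomial weights $\binom{N}{k}p^k(1-p)^{N-k}$ over $k \in [N]$, the $k=0$ stratum contributing nothing. The constant part sums to $\frac{1-(1-p)^N}{p^2}\esp{Z}^2$ and the $1/k$ part reproduces exactly $\sum_{i}\binom{N}{i}p^i(1-p)^{N-i}\frac{1}{i}$ times $(\frac{\esp{Z^2}}{p}-\frac{\esp{Z}^2}{p^2})$, matching \eqref{an:eq:var_sum_z} once the coefficient of $\esp{Z}^2/p^2$ is correctly identified. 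To pass to \eqref{an:eq:var_sum_z_simple} I would bound $1/i \leq 1$, so that $\sum_i \binom{N}{i}p^i(1-p)^{N-i}\frac{1}{i} \leq 1-(1-p)^N$, discard the nonpositive $-\esp{Z}^2/p^2$ inside the second factor, and invoke the elementary relation $\esp{Z}^2 \leq p\,\esp{Z^2}$ (equivalently $\esp{\tilde Z}^2 \leq \esp{\tilde Z^2}$) to reconcile the $\esp{Z}^2$ term.

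The step that needs the most care is precisely the $\esp{Z}^2/p^2$ coefficient in the second moment. Since $\esp{A\mid K}$ equals $\esp{Z}/p$ only on $\{K \geq 1\}$ and vanishes on $\{K=0\}$, the conditional mean is genuinely non-constant, and the between-stratum contribution $\variance{\esp{A\mid K}} = \frac{\esp{Z}^2}{p^2}(1-(1-p)^N)(1-p)^N$ must be retained. I would therefore track this coefficient explicitly through the binomial sum, where it emerges as $1-(1-p)^N$ to the first power, rather than through the shortcut $\esp{A^2} = \esp{A}^2 + \esp{\variance{A\mid K}}$, which silently omits that term and would leave the coefficient squared. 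Everything else is routine once the conditional i.i.d.\ representation of the nonzero entries given $K$ is in place, so I would state that representation carefully, as the whole argument rests on it.
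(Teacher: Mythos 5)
Your proposal is correct and follows essentially the same route as the paper's proof: you condition on the number $K$ of nonzero entries where the paper conditions on the indicator vector $Q$, but both reduce $A$ to an empirical mean of i.i.d.\ draws from the law of $Z$ given $Z\neq 0$ and then average the conditional moments against the binomial weights. Your careful point about the coefficient of $\esp{Z}^2/p^2$ is well taken: the exact identity carries $1-(1-p)^N$ to the \emph{first} power, which is what the paper's own derivation (and a sanity check at $N=1$) produces, so the squared factor displayed in \eqref{an:eq:var_sum_z} and \eqref{an:eq:var_sum_z_simple} is a typo in the statement rather than a flaw in your argument.
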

\begin{proof}
    We introduce $\mu$ the measure of $Z$ and $\mu^+$ the measure of $Z^+$ defined for any measurable $A \subset \reel$
    \begin{align*}
        \mu^+(A) = \frac{\mu(A \setminus \{0\})}{\mu(\reel \setminus \{0\})}.
    \end{align*}
    Intuitively $Z^+$ is the random variable we obtain if we drop all realizations where $Z = 0$.
    One can verify that $\esp{Z^+} = \frac{\esp{Z}}{p}$ and $\esp{(Z^+)^2} = \frac{\esp{Z^2}}{p}$.

    Taking $Q \sim \mathcal{B}(p)$ a Bernoulli of parameter $p$ independent from $Z^+$, one can readily notice that $Z \sim Q Z^+$. We thus   take $N$ i.i.d.~such copies $(Q_i, Z_i^+)_{i\in [N]}$. Let us take any $q \in \{0, 1\}^N$ so that $\abs{q} := \sum_{i\in[N]} q_i > 0$,

    \begin{align*}
        \esp{A | Q = q} &= \esp{\frac{\sum_{i\in [N]} q_i Z^+_i}{\abs{q}}}\\
        &= \frac{\sum_{i : q_i = 1} \esp{Z^+_i}}{\abs{q}}\\
        &= \esp{Z^+}.
    \end{align*}
    Given that $\esp{A | \sum_{i\in [N]} Q_i = 0} = 0$ and that $\prob{\sum_{i\in [N]} Q_i \neq 0} = 1 - (1 - p)^N$, we get \eqref{an:eq:esp_sum_z}.

    We will denote $\variance{A | Q = q} = \esp{A^2 | Q=q} - \esp{A|Q=q}^2$. We study
    \begin{align*}
        \esp{A^2 | Q = q} &= \variance{A | Q = q} + \esp{A | Q = q}^2\\
        &= \variance{\frac{\sum_{i : q_i=1} Z^+_i}{\abs{q}}} + \frac{\esp{Z}^2}{p^2} \\
        &= \frac{\variance{Z^+}}{\abs{q}} + \frac{\esp{Z}^2}{p^2}\\
        &= \frac{\esp{Z^2}}{p\abs{q}} - \frac{\esp{Z}^2}{p^2 \abs{q}} + \frac{\esp{Z}^2}{p^2}.
    \end{align*}
    \begin{align*}
        \esp{A^2}
        &= \sum_{k \in [N]} \esp{A^2 | \abs{Q} = k} \prob{\abs{Q} = k}\\
        &= \sum_{k \in [N]} \binom{N}{k} p^{k} (1-p)^{N-k} \frac{1}{k}\left(
            \frac{\esp{Z^2}}{p} - \frac{\esp{Z}^2}{p^2}\right)
            + \frac{\esp{Z}^2}{p^2}
                 (1 - (1 - p)^N)\\
        &\leq (1 - (1-p)^N)
            \frac{\esp{Z^2}}{p} + (1 - (1-p)^N) \frac{\esp{Z}^2}{p^2},
    \end{align*}
    as $\sum_{k \in [N]} \binom{N}{k} p^{k} (1-p)^{N-k} \frac{1}{k} \leq 1 - (1-p)^N$ which gives us~\eqref{an:eq:var_sum_z}
    and conclude this proof.
\end{proof}
Thanks to Lemma~\ref{an:lemma:esp_var} we get
\begin{align}
    &\forall k\in [d], \esp{\gab{n}^{(k)}} = \frac{1 - (1- p^{(k)})^B}{p^{(k)}} \esp{F'(w_{n-1})}\\
    \label{an:eq:var_ab}
    &\forall k\in [d], \esp{\left(\gab{n}^{(k)}\right)^2} \leq \frac{(1 - (1- p^{(k)})^N)}{p}
    \esp{\left(f'(w_{n-1})^{(k)}\right)^2} + \frac{(1 - (1- p^{(k)})^N)}{p^2} \norm{F'(w_{n-1})^{(k)}}^2.
\end{align}

We now present an improved bound for the second order moment of $Z$
that can be better if than the previous one in the case where $N p$ is large enough.
Although we will not provide a full proof of convergence using this result for simplicity,
we will comment on how this impact convergence in the proof of theorem~\ref{an:thm_convergence_ab}.

\begin{lemma}
\label{an:lemma:var_improved}
    With the same notation as in lemma~\ref{an:lemma:esp_var}, if $N p \geq 5$ we have
    \begin{align}
        \label{an:eq:var_sum_z_complex}
        \esp{A^2} &\leq\frac{5 (1 - (1-p)^N) \esp{Z^2}}{N p^2} + \frac{(1 - (1-p)^N) \esp{Z}^2}{p^2}.
    \end{align}
\end{lemma}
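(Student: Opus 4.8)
The plan is to start from the exact second-moment identity \eqref{an:eq:var_sum_z} of Lemma~\ref{an:lemma:esp_var} and sharpen only the binomial factor multiplying the variance contribution. Writing $\abs{Q}\sim\mathrm{Binomial}(N,p)$ for the number of indices $i$ with $Z_i\neq 0$ (as in the proof of Lemma~\ref{an:lemma:esp_var}), and setting $s:=\sum_{i\in[N]}\binom{N}{i}p^i(1-p)^{N-i}\tfrac1i=\esp{\abs{Q}^{-1}\mathbbm{1}\{\abs{Q}\ge1\}}$ and $q:=1-(1-p)^N=\prob{\abs{Q}\ge1}$, identity \eqref{an:eq:var_sum_z} reads
\begin{align*}
\esp{A^2}=\frac{s}{p}\esp{Z^2}+\frac{q^2-s}{p^2}\esp{Z}^2.
\end{align*}
Since $\esp{Z^2}\ge0$ and $\esp{Z}^2\ge0$, it suffices to dominate the two coefficients by those of the target bound \eqref{an:eq:var_sum_z_complex}, i.e.\ to show $\tfrac{s}{p}\le\tfrac{5q}{Np^2}$ and $\tfrac{q^2-s}{p^2}\le\tfrac{q}{p^2}$.

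The coefficient of $\esp{Z}^2$ is immediate: since $q\in[0,1]$ we have $q^2\le q$, and $s\ge0$, so $q^2-s\le q$, which is exactly the second required inequality. Everything therefore reduces to proving $s\le\tfrac{5q}{Np}$, that is, to controlling the harmonic-type sum $s=\esp{\abs{Q}^{-1}\mathbbm{1}\{\abs{Q}\ge1\}}$, which is the only nontrivial part.

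The key step I would use is to trade $\abs{Q}^{-1}$ for $(\abs{Q}+1)^{-1}$: for every integer $k\ge1$ one has $\tfrac1k\le\tfrac{2}{k+1}$, hence $s\le 2\,\esp{(\abs{Q}+1)^{-1}}$. The advantage is that the latter expectation has a closed form. Using $\tfrac{1}{k+1}\binom{N}{k}=\tfrac{1}{N+1}\binom{N+1}{k+1}$ and reindexing gives the standard identity $\esp{(\abs{Q}+1)^{-1}}=\tfrac{1-(1-p)^{N+1}}{(N+1)p}\le\tfrac{1}{Np}$, so that $s\le\tfrac{2}{Np}$. It then remains to absorb this into $\tfrac{5q}{Np}$: from $Np\ge5$ and $(1-p)^N\le e^{-Np}$ we get $q=1-(1-p)^N\ge1-e^{-5}$, whence $2\le5(1-e^{-5})\le5q$ and therefore $s\le\tfrac{2}{Np}\le\tfrac{5q}{Np}$. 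Plugging the two coefficient bounds back into the displayed identity yields \eqref{an:eq:var_sum_z_complex}.

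The only genuine obstacle is bounding $s$, since the naive estimate $s\le q$ used in Lemma~\ref{an:lemma:esp_var} is off by precisely the factor $\tfrac{1}{Np}$ that drives the improved variance reduction; the reduction $\abs{Q}^{-1}\le 2(\abs{Q}+1)^{-1}$ together with the closed form for $\esp{(\abs{Q}+1)^{-1}}$ is what makes this clean, and it incidentally shows the hypothesis $Np\ge5$ is far stronger than needed (any $q\ge2/5$ would do). A Chernoff-based alternative---splitting the sum at $\abs{Q}\approx Np/2$, bounding $\abs{Q}^{-1}\le\tfrac{2}{Np}$ on the bulk and the lower tail by $e^{-Np/8}$---also works but produces messier constants, so I would prefer the moment-identity route.
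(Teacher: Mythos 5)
Your proof is correct, and the key step is genuinely different from the paper's. Both arguments start from the exact identity \eqref{an:eq:var_sum_z} and reduce the problem to showing $s:=\esp{\abs{Q}^{-1}\mathbbm{1}\{\abs{Q}\ge 1\}}\le \frac{5(1-(1-p)^N)}{Np}$ (the treatment of the $\esp{Z}^2$ coefficient via $q^2-s\le q$ is the same modest step in both, whether done explicitly as you do or implicitly by dropping a nonpositive term). Where you diverge is in how this harmonic-type binomial sum is controlled: the paper conditions on $\{\abs{Q}>0\}$, splits at $\abs{Q}\approx Np/2$, applies a Chernoff lower-tail bound $\prob{\abs{Q}\le Np/2}\le e^{-Np/8}$, and then invokes the numerical estimate $e^{-Np/8}/(1-(1-p)^N)\le 3/(Np)$ for $Np\ge 5$ (justified only as ``standard analysis techniques,'' and in fact rather tight near $Np=8$, where $Np\,e^{-Np/8}$ attains its maximum $8/e\approx 2.94$ against the bound $3(1-(1-p)^N)$). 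You instead use $\frac1k\le\frac{2}{k+1}$ together with the closed form $\esp{(\abs{Q}+1)^{-1}}=\frac{1-(1-p)^{N+1}}{(N+1)p}$ to get $s\le \frac{2}{Np}$ unconditionally, and only invoke $Np\ge 5$ at the very end to absorb $2\le 5(1-e^{-5})\le 5q$. Your route is entirely elementary (no concentration inequality), gives a fully explicit constant with room to spare, and makes visible that the hypothesis $Np\ge 5$ is needed only to guarantee $1-(1-p)^N\ge 2/5$; it even yields the cleaner unconditional statement $\esp{A^2}\le \frac{2\esp{Z^2}}{Np^2}+\frac{(1-(1-p)^N)\esp{Z}^2}{p^2}$. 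The paper's Chernoff route is the one you correctly identify as the ``messier constants'' alternative.
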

\begin{proof}
We reuse the notation from the proof of Lemma~\ref{an:lemma:esp_var}. Let us define $M := \abs{Q}$
which follows a binomial law of parameter $N$ and $p$. Using  Chernoff's inequality, we have
for any $k \leq N p$,
\begin{align*}
    \prob{M \leq k} &\leq \exp{\left(-\frac{(N p - k)^2}{2 N p}\right)},
\end{align*}
taking $k = \frac{Np}{2}$ we obtain
\begin{align*}
    \prob{M \leq \frac{N p}{2}} &\leq \exp{\left(-\frac{N p}{8}\right)}.
\end{align*}
We have
\begin{align*}
    \esp{\frac{1}{M} | M > 0}
    &\leq \prob{M \leq \frac{N p}{2} | M > 0}
     + \frac{2}{N p}\\
     &= \frac{\prob{M \leq \frac{N p}{2}}}{\prob{M > 0}}
     + \frac{2}{N p}\\
     &\leq \frac{\exp{\left(-\frac{N p}{8}\right)}}{1 - (1-p)^N} + \frac{2}{N p}.
\end{align*}
We have as $p \geq 5/N$ and using standard analysis techniques,
\begin{align*}
    \frac{\exp{\left(-\frac{N p}{8}\right)}}{1 - (1-p)^N}
    &\leq \frac{3}{N p}.
\end{align*}
We obtain
\begin{align*}
    \esp{\frac{1}{M} | M > 0} &\leq \frac{5}{N p}.
\end{align*}
Plugging this result into (\ref{an:eq:var_sum_z}), we immediately have
\begin{align*}
    \esp{A^2} &\leq \frac{5 (1 - (1-p)^N) \esp{Z^2}}{N p^2} + \frac{(1 - (1-p)^N) \esp{Z}^2}{p^2}.
\end{align*}
\end{proof}

\ansection{Proof of convergence of AdaBatch and mini-batch SGD}
\label{an:sec:proof_adabatch_sgd}

\ansubsection{Constant step size SGD with mini-batch}

We will first give a convergence result for the regular mini-batch SGD, which is adapted from \cite{proof_sgd}.

\begin{assumption}
\label{an:assumptions}
    We assume there exists a convex compact set $\mathcal{D}\subset \reel^d$ so that $f$ and $F$ verifies the following assumptions for $\mu$, $L$ and $R$ strictly positive,
    \begin{enumerate}
        \item The hessian $F''$ of $F$ is bounded from above and below as:
        \begin{equation}
        \label{an:eq:assumption:lipchitz}
            \forall w \in \mathcal{D}, \mu \Id \preceq F''(w) \preceq L \Id,
        \end{equation}
         with $\mu > 0$ so that $f$ is $\mu$ strongly convex
        and $L$ smooth over $\mathcal{D}$.
        \item We assume $f''$ is almost surely bounded,
        \begin{equation}
        \label{an:eq:assumption:as_liptchitz}
            \forall w \in \mathcal{D},
            f''(w) \preceq R^2 \Id.
        \end{equation}
        \item Let $w_* := \argmin_{w \in \mathcal{D}} F(w)$,
        \begin{align}
        \label{an:eq:global_minimizer}
            &F'(w_*) = 0,
        \end{align}
        which means in particular that $w_*$ is a global minimizer of $F$ over $\reel^d$.
    \end{enumerate}
\end{assumption}

This does not limit us to the case of \emph{globally} strongly convex functions $F$ as we only require it to be strongly convex on a compact subset that contains the global optimum $w_*$. In practice, this is often going to be the case, even when using a non strictly convex loss such as the logistic loss as soon as the problem is not perfectly separable, i.e., there is no hyperplane that perfectly separates the classes we are trying to predict.

We will now study the recursion for a given $w_0 \in \reel^d$ given by
\begin{align}
\label{an:eq:recursion_sgd}
    \forall n > 0, w_n = \Pi_{\mathcal{D}} \left[w_{n-1} - \frac{\gamma}{B}\sum_{b\in [B]} f'_{n, b}(w_{n-1})\right],
\end{align}
 where $\Pi_{\mathcal{D}}[w] := \argmin_{x\in \mathcal{D}}\norm{w - x}^2$ is the orthogonal projection on the set $\mathcal{D}$. This extra step of projection is required for this proof technique but experience shows that it is not needed.

\begin{thm}[Convergence of $F_n - F_*$ for SGD with mini-batch]
\label{an:thm_convergence_mini_batch}
If Assumptions~\ref{an:assumptions} are verified and
\begin{align}
\label{an:eq:condition_step_size}
    \gamma \left[L \left(1 - \frac{1}{B}\right) + \frac{2 R^2}{B}\right] \leq 1,
\end{align}
then for any $N > 0$,
    \begin{equation}
    \label{an:eq:convergence_norm_mb}
        \norm{w_N - w_*}^2 \leq (1 - \gamma \mu/2)^N \norm{w_0 - w_*}^2 + \frac{4\gamma}{B \mu} \esp{\norm{f'(w_*)}^2},
    \end{equation}
    and introducing \begin{equation*}
        \bar{w}_N = \frac{\sum_{n\in[N]} (1 - \gamma \mu/2)^{N-n}w_n}{
        \sum_{n\in[N]}(1 - \gamma \mu/2)^{N-n}},
    \end{equation*} we have
    \begin{equation}
    \label{an:eq:convergence_f_mb}
        \esp{F(\bar{w}_N)} - F_* \leq
            \gamma^{-1}(1 - \gamma \mu /2)^N \norm{w_0 - w_*}^2
            + \frac{2 \gamma  }{B} \esp{\norm{f'(w_*)}^2}.
    \end{equation}

\end{thm}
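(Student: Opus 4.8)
The plan is to run the classical one-step contraction argument for projected mini-batch SGD, adapted from \cite{proof_sgd}, and then to unroll the resulting recursion in two different ways to obtain the distance bound~\eqref{an:eq:convergence_norm_mb} and the function-value bound~\eqref{an:eq:convergence_f_mb}. Write $g_n := \frac{1}{B}\sum_{b\in[B]} f'_{n,b}(w_{n-1})$ for the mini-batch gradient and let $\mathcal{F}_{n-1}$ be the $\sigma$-field generated by $w_0,\ldots,w_{n-1}$. Since $w_*\in\mathcal{D}$ and $\Pi_{\mathcal{D}}$ is $1$-Lipschitz with $\Pi_{\mathcal{D}}(w_*)=w_*$, the first step removes the projection:
\begin{equation*}
\norm{w_n - w_*}^2 \le \norm{w_{n-1} - \gamma g_n - w_*}^2 = \norm{w_{n-1}-w_*}^2 - 2\gamma\dotp{g_n, w_{n-1}-w_*} + \gamma^2\norm{g_n}^2.
\end{equation*}
Taking $\esp{\cdot\mid\mathcal{F}_{n-1}}$ and using that the samples are i.i.d., so that $\esp{g_n\mid\mathcal{F}_{n-1}}=F'(w_{n-1})$, turns the cross term into $-2\gamma\dotp{F'(w_{n-1}), w_{n-1}-w_*}$ and leaves me to control $\esp{\norm{g_n}^2\mid\mathcal{F}_{n-1}}$.

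The argument then rests on two ingredients. First, $\mu$-strong convexity of $F$ on $\mathcal{D}$ gives $\dotp{F'(w_{n-1}), w_{n-1}-w_*}\ge \bigl(F(w_{n-1})-F_*\bigr) + \frac{\mu}{2}\norm{w_{n-1}-w_*}^2$, which supplies both the geometric contraction and a function-gap term. Second, and this is the crux, I bound the mini-batch second moment. As $g_n$ is an average of $B$ i.i.d.\ gradients, the variance decomposition exhibits the $1/B$ noise reduction:
\begin{equation*}
\esp{\norm{g_n}^2\mid\mathcal{F}_{n-1}} = \Bigl(1-\tfrac{1}{B}\Bigr)\norm{F'(w_{n-1})}^2 + \tfrac{1}{B}\,\esp{\norm{f'(w_{n-1})}^2\mid\mathcal{F}_{n-1}}.
\end{equation*}
I convert each term into a multiple of $F(w_{n-1})-F_*$: smoothness (co-coercivity) of the $L$-smooth $F$ gives $\norm{F'(w_{n-1})}^2=\norm{F'(w_{n-1})-F'(w_*)}^2\le 2L\bigl(F(w_{n-1})-F_*\bigr)$, while smoothness of the a.s.\ $R^2$-smooth convex $f$, combined with $\norm{a+b}^2\le 2\norm{a}^2+2\norm{b}^2$ and $F'(w_*)=0$, gives $\esp{\norm{f'(w_{n-1})}^2\mid\mathcal{F}_{n-1}}\le 4R^2\bigl(F(w_{n-1})-F_*\bigr)+2\sigma^2$ with $\sigma^2:=\esp{\norm{f'(w_*)}^2}$. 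Collecting terms yields $\esp{\norm{g_n}^2\mid\mathcal{F}_{n-1}}\le 2c\,\bigl(F(w_{n-1})-F_*\bigr)+\frac{2\sigma^2}{B}$ where $c:=L(1-\tfrac{1}{B})+\frac{2R^2}{B}$.

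Combining the two ingredients, and writing $1-\gamma\mu=(1-\tfrac{\gamma\mu}{2})-\tfrac{\gamma\mu}{2}$ while discarding the spare nonpositive term $-\tfrac{\gamma\mu}{2}\norm{w_{n-1}-w_*}^2$, I obtain after taking full expectations the master recursion
\begin{equation*}
a_n \le (1-\tfrac{\gamma\mu}{2})\,a_{n-1} - 2\gamma(1-\gamma c)\,\Delta_{n-1} + \tfrac{2\gamma^2\sigma^2}{B},
\end{equation*}
with $a_n:=\esp{\norm{w_n-w_*}^2}$ and $\Delta_n:=\esp{F(w_n)-F_*}$, where condition~\eqref{an:eq:condition_step_size} reads $\gamma c\le 1$ and hence $1-\gamma c\ge 0$. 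For~\eqref{an:eq:convergence_norm_mb} I drop the nonpositive gap term and unroll the geometric recursion $a_n\le(1-\tfrac{\gamma\mu}{2})a_{n-1}+\tfrac{2\gamma^2\sigma^2}{B}$, whose fixed point $\tfrac{2\gamma^2\sigma^2/B}{\gamma\mu/2}=\tfrac{4\gamma\sigma^2}{B\mu}$ is the additive constant. For~\eqref{an:eq:convergence_f_mb} I keep the gap term, move $2\gamma(1-\gamma c)\Delta_{n-1}$ to the left (its coefficient being nonnegative), multiply the $n$-th inequality by $(1-\tfrac{\gamma\mu}{2})^{N-n}$ and sum over $n\in[N]$: the $a$-terms telescope to at most $(1-\tfrac{\gamma\mu}{2})^N a_0$ and the noise terms sum to $\tfrac{2\gamma^2\sigma^2}{B}S$ with $S:=\sum_{n\in[N]}(1-\tfrac{\gamma\mu}{2})^{N-n}$. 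Dividing through by the total weight $S$ (using $S\ge 1$ and the step-size condition to keep the prefactor $1-\gamma c$ bounded away from $0$) makes $S$ cancel in the noise contribution, leaving $\tfrac{2\gamma}{B}\sigma^2$, and bounds the bias contribution by $\gamma^{-1}(1-\tfrac{\gamma\mu}{2})^N\norm{w_0-w_*}^2$; a final application of Jensen's inequality to the convex $F$ replaces the weighted average of $\esp{F(w_n)}$ by $\esp{F(\bar w_N)}$.

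The main obstacle is the second-moment estimate for $g_n$: one must correctly isolate the $1/B$ variance-reduction factor and then, via smoothness together with $F'(w_*)=0$, rewrite every term as a multiple of $F(w_{n-1})-F_*$, so that the residual second-order term is exactly the quantity the step-size condition $\gamma c\le 1$ is built to absorb. A secondary technical point is the weighted telescoping for~\eqref{an:eq:convergence_f_mb}, where one must track both the geometric weights and the prefactor $1-\gamma c$ to collapse the constants; the projection $\Pi_{\mathcal{D}}$ itself is harmless, entering only through the opening non-expansiveness step that removes it.
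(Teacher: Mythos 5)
Your opening steps (non-expansiveness of the projection, the conditional expectation, and the variance decomposition of $\esp{\norm{g_n}^2\mid\mathcal{F}_{n-1}}$ into $(1-\frac{1}{B})\norm{F'(w_{n-1})}^2+\frac{1}{B}\esp{\norm{f'(w_{n-1})}^2}$) coincide with the paper's, and your derivation of \eqref{an:eq:convergence_norm_mb} is sound: dropping the nonpositive gap term and unrolling the geometric recursion is exactly what is needed there. The gap is in \eqref{an:eq:convergence_f_mb}. By converting $\norm{F'(w_{n-1})}^2$ into $2L\bigl(F(w_{n-1})-F_*\bigr)$ and $\esp{\norm{f'(w_{n-1})}^2}$ into $4R^2\bigl(F(w_{n-1})-F_*\bigr)+2\sigma^2$ \emph{before} invoking strong convexity, you end up with the function gap carrying the coefficient $2\gamma(1-\gamma c)$, where $c=L(1-\frac{1}{B})+\frac{2R^2}{B}$. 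The hypothesis \eqref{an:eq:condition_step_size} only guarantees $\gamma c\leq 1$, i.e., $1-\gamma c\geq 0$; it does not keep $1-\gamma c$ bounded away from $0$, contrary to what your final division requires. At $\gamma c=1$ the coefficient vanishes and no bound on the function values follows; for $\frac{1}{2}<\gamma c<1$ the division yields the constants $\frac{1}{2\gamma(1-\gamma c)}$ and $\frac{\gamma\sigma^2}{B(1-\gamma c)}$, strictly worse than the claimed $\gamma^{-1}$ and $\frac{2\gamma\sigma^2}{B}$. Your route therefore only proves \eqref{an:eq:convergence_f_mb} under the stronger condition $\gamma c\leq\frac{1}{2}$.

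The paper avoids this by never converting the squared gradient norms into function gaps. It uses co-coercivity to bound $\norm{F'(w_{n-1})}^2\leq L\,F'(w_{n-1})^T(w_{n-1}-w_*)$ and $\esp{\norm{f'(w_{n-1})}^2}\leq 2R^2\,F'(w_{n-1})^T(w_{n-1}-w_*)+2\norm{f'(w_*)}^2$, so that the whole one-step decrease is proportional to the single inner product $F'(w_{n-1})^T(w_{n-1}-w_*)$ with coefficient $\gamma A$, $A=2-\gamma c\geq 1$. Since that inner product is nonnegative, $A$ can be replaced by $1$, and only then is strong convexity applied, producing the gap with the clean coefficient $\gamma$ and the contraction factor $1-\gamma\mu/2$ simultaneously. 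If you reroute your smoothness estimates through this inner product, the rest of your weighted telescoping and the Jensen step go through verbatim and give the stated constants.
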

Introducing $\bar{w}_N$ allows for an easier comparison directly on the objective
function. This is made for qualitative analysis and we do not in practice perform this averaging.

We can see that the error given by \eqref{an:eq:convergence_f_mb} can be composed in two terms, one that measure how quickly we move away from the starting point and the second that depends on the stochastic noise around the optimum. We will call the former the \emph{bias} term and the latter the \emph{variance} term, following the terminology introduced by \cite{bach13}.

\begin{proof}
We introduce $\forall n \in [N], \mathcal{F}_{n-1}$ the $\sigma$-field generated by $(f_{i, b})_{i\in [n-1], b\in[B]}$. Let us take $n \in [N]$ and
introduce $\eta_n := w_n - w_*$ and $g_n := \frac{1}{B}\sum_{b\in[B]} f'_{n,b}(w_{n-1})$.
We then proceed to bound $\norm{\eta_n}^2$,
\begin{align*}
    \norm{\eta_n}^2 &\leq \norm{\eta_{n-1} - \gamma g_n}^2
    \quad \text{as $\Pi_{\mathcal{D}}$ is contractant for $\norm{\cdot}$}\\
    &=
        \norm{\eta_{n-1}}^2
        -2 \gamma g_n^T \eta_{n-1}
        + \gamma^2 \norm{g_n}^2.
\end{align*}

Taking the expectation while conditioning on $\mathcal{F}_{n-1}$ we obtain
\begin{align}
\label{an:eq:ref_cond_esp_eta}
    \esp{\norm{\eta_n}^2 | \mathcal{F}_{n-1}} &\leq
        \norm{\eta_{n-1}}^2
        -2 \gamma F'(w_{n-1})^T\eta_{n-1}
        + \gamma^2 \esp{\norm{g_n}^2 | \mathcal{F}_{n-1}},
\end{align}
\begin{align*}
    \esp{\norm{g_n}^2 | \mathcal{F}_{n-1}} &=
        \frac{\esp{\norm{f'(w_{n-1})}^2}}{B} +
        \norm{F'(w_{n-1})}^2 \left(1 - \frac{1}{B}\right).
\end{align*}
Injecting this in~\eqref{an:eq:ref_cond_esp_eta} gives us
\begin{align}
\label{an:eq:before_convex_mb}
    \esp{\norm{\eta_n}^2 | \mathcal{F}_{n-1}} &\leq
        \norm{\eta_{n-1}}^2
        -2 \gamma F'(w_{n-1})^T\eta_{n-1}
        + \frac{\gamma^2}{B} \esp{\norm{f'(w_{n-1})}^2}
        + \gamma^2 \norm{F'(w_{n-1})}^2 \left(1 - \frac{1}{B}\right).
\end{align}
As $F'' \preceq L \Id $ and using the co-coercivity of $F'$ we have
\begin{align*}
    \norm{F'(w_{n-1})}^2 &= \norm{F'(w_{n-1}) - F'(w_*)}^2 \\
        &\leq L  (F'(w_{n-1}) - F'(w_*))^T (w_{n-1} - w_*)\\
        &= L   F'(w_{n-1})^T (w_{n-1} - w_*).
\end{align*}

We have
\begin{align*}
    \norm{f'(w_{n-1})}^2 &\leq   2\norm{f'(w_{n-1}) - f'(w_*)}^2 +   2\norm{f'(w_{*})}^2
    \leq 2R^2   (f'(w_{n-1}) - f'(w_*))^T(w_{n-1} - w_*) +   2\norm{f'(w_{*})}^2
\end{align*}
and
\begin{align*}
    \esp{\norm{f'(w_{n-1})}^2  | \mathcal{F}_{n-1}} &\leq
    2  R^2 F'(w_{n-1})^T(w_{n-1} - w_*)
    +   2\norm{f'(w_{*})}^2.
\end{align*}

Injecting in~\eqref{an:eq:before_convex_mb} we get
\begin{align*}
    \esp{\norm{\eta_n}^2 | \mathcal{F}_{n-1}} &\leq
        \norm{\eta_{n-1}}^2
- \gamma F'(w_{n-1})^T\eta_{n-1} \underbrace{\left(
    2 - \gamma L\left(1 - \frac{1}{B}\right) - \frac{2\gamma R^2}{B}
\right)}_{A}
        + \frac{2\gamma^2\esp{\norm{f'(w_*)}^2}}{B}.
\end{align*}
We want $A$ to be large enough, we will take
\begin{align}
    \gamma \left[L \left(1 - \frac{1}{B}\right) + \frac{2 R^2}{B}\right] \leq 1,
\end{align}
which gives us $A \geq 1$ and
\begin{align*}
    \esp{\norm{\eta_n}^2 | \mathcal{F}_{n-1}} &\leq
        \norm{\eta_{n-1}}^2
- \gamma F'(w_{n-1})^T\eta_{n-1}
        + \frac{2 \gamma^2\esp{\norm{f'(w_*)}^2}}{B}.
\end{align*}
As $F$ is $\mu$ strictly convex, we have
\begin{align*}
    F_* - F(w_{n-1}) \geq F'(w_{n-1})^T(w_* - w_{n-1}) + \frac{\mu}{2} \norm{\eta_{n-1}}^2,
\end{align*}
which allows to obtain
\begin{align}
\label{an:eq:proof_mb_full_decrease}
    \esp{\norm{\eta_n}^2 | \mathcal{F}_{n-1}} &\leq
        (1 - \gamma \mu / 2) \norm{\eta_{n-1}}^2
        - \gamma (F(w_{n-1}) - F_*)
        + \frac{2\gamma^2}{B} \esp{\norm{f'(w_*)}^2}.
\end{align}
Taking the full expectation gives us
\begin{align*}
    \esp{\norm{\eta_n}^2} &\leq
        (1 - \gamma \mu / 2) \esp{\norm{\eta_{n-1}}^2}
        - \gamma (\esp{F(w_{n-1})} - F_*)
        + \frac{2\gamma^2\esp{\norm{f'(w_*)}^2}}{B},\\
        &\leq
        (1 - \gamma \mu / 2)^n \norm{\eta_0}^2
        + \frac{2\gamma^2\esp{\norm{f'(w_*)}^2}}{B} \sum_{0 \leq i < n} (1 - \gamma \mu / 2)^i\\
        &\leq (1 - \gamma \mu / 2)^n \norm{\eta_0}^2
        + \frac{4 \gamma}{B \mu} \esp{\norm{f'(w_*)}^2},
\end{align*}
which gives us~\eqref{an:eq:convergence_norm_mb}.

Let us now take $\alpha := (1 - \gamma \mu / 2)$, let us call $u_n := \esp{\norm{\eta_n}^2}$, we have using~\eqref{an:eq:proof_mb_full_decrease},
\begin{align*}
\gamma \delta_{n-1} &\leq \alpha u_{n-1} - u_n + 2 \gamma^2 R^2\\
\gamma \delta_{n-1} \alpha^{-n} &\leq \alpha^{-n + 1} u_{n-1} - u_n \alpha^{-n}
    + 2 \gamma^2 R^2 \alpha^{-n},
\end{align*}
summing for $n$ from $1$ to $N$ we obtain
\begin{align*}
    \gamma \sum_{n\in [N]} \delta_{n-1} \alpha^{-n}
    \leq u_0 - u_N \alpha^{-N}
        + \frac{2\gamma^2}{B} \esp{\norm{f'(w_*)}^2}\sum_{n\in[N]} \alpha^{-n},
\end{align*}
dividing by $\sum_{n\in[N]} \alpha^{-n}$ on each side and using the convexity of $F$ we get
\begin{align*}
    \esp{F(\bar{w}_{N-1})} - F_* \leq
        \alpha^N u_0 + \frac{2\gamma^2}{B} \esp{\norm{f'(w_*)}^2},
\end{align*}
which gives us~\eqref{an:eq:convergence_f_mb} and concludes this proof.
\end{proof}

\ansubsection{Convergence of reconditioned SGD}
\label{an:subsection:reconditioned_sgd}
Let us now assume that we have for some matrices $T$ and $C$ definite positive so that
\begin{align*}
    \forall w\in\mathcal{D}, \mu T\preceq F''(w) \preceq L T,\\
    \forall w\in\mathcal{D}, f''(w) \preceq L \Id \quad  a.s.
\end{align*}
We now study $w_n$ defined by the following recurence
\begin{align}
\label{an:recondition:recurrence_rule}
    w_n = w_{n-1} - \frac{\gamma}{B} C \sum_{b\in[B]} f'_{n, b}(w_{n-1}).
\end{align}
First let us introduce $v_0 := \sqrt{C}^{-1} w_0$, $v_* := \sqrt{C}^{-1} w_*$ and $h(w) := f(\sqrt{C}w)$ as well
as $\forall n \in[N], b \in [B], h_{n,b}(w) := f_{n,b}(\sqrt{C}w)$, then we define
\begin{align*}
    v_n &:= v_{n-1} - \frac{\gamma}{B} \sum_{b\in[B]} h'_{m,b}(v_{n-1}).
\end{align*}
Multiplying by $\sqrt{C}$ we recover the same recurrence rule as~\eqref{an:recondition:recurrence_rule} for $w_n = \sqrt{C} v_n$. Therefore, the convergence of $v_n$ will give us the convergence of $w_n$.

Let us take $H(w) := \esp{h(w)} = F(\sqrt{C}w)$. By definition we have
\begin{align*}
    \forall w\in\mathcal{D}_C, \mu \sqrt{C}T\sqrt{C} \preceq F''(w) \preceq L \sqrt{C}T\sqrt{C},\\
    \forall w\in\mathcal{D}_C, h''(w) \preceq R^2 L_C\Id \quad  a.s.,
\end{align*}
where $\mathcal{D}_C = \sqrt{C}^{-1} \mathcal{D}$, $L_C$ is the largest eigen value of $C$.
 If we take $\mu_{C, T}$ (resp $L_{C,T}$) the smallest (resp largest) eigenvalue
 of $\sqrt{C}T\sqrt{C}$, then using Theorem~\ref{an:thm_convergence_mini_batch}, we have
 for
 \begin{align}
 \label{an:bound_gamma_reco}
    \gamma \left[L L_{C, T} \left(1 - \frac{1}{B}\right) + \frac{2 L_C R^2}{B}\right] \leq 1,
\end{align}
\begin{equation*}
    \norm{v_N - v_*}^2 \leq (1 - \gamma \mu_{C,T})^N \norm{v_0 - v_*}^2 + \frac{2\gamma}{B \mu} \esp{\norm{h'(v_*)}^2},
\end{equation*}
and introducing
\begin{equation*}
    \bar{v}_N = \frac{\sum_{n\in[N]} (1 - \gamma \mu_{C,T}/2)^{N-n}v_n}{
    \sum_{n\in[N]}(1 - \gamma \mu_{C,T}/2)^{N-n}},
\end{equation*} we have
\begin{equation*}
    \esp{H(\bar{v}_N)} - H_* \leq
        \gamma^{-1}(1 - \gamma \mu_{C,T})^N \norm{v_0 - v_*}^2
        + \frac{2 \gamma L_C R^2}{B} \esp{\norm{h'(v_*)}^2}.
\end{equation*}
Using $w_n = \sqrt{C} v_n$, we obtain
\begin{equation*}
    \norm{w_N - w_*}_{C^{-1}}^2 \leq (1 - \gamma \mu_{C,T})^N \norm{w_0 - w_*}^2_{C^{-1}} + \frac{4\gamma}{B \mu} \esp{\norm{f'(w_*)}^2_C},
\end{equation*}
\begin{equation}
 \label{an:conv_reco}
    \esp{F(\bar{w}_N)} - F_* \leq
        \gamma^{-1}(1 - \gamma \mu_{C,T})^N \norm{w_0 - w_*}_{C^{-1}}^2
        + \frac{2 \gamma L_C}{B} \esp{\norm{f'(w_*)}^2_C}.
\end{equation}

\paragraph{Application to sparse optimization.}
In the sparse setting, we have made the assumption that $T := \Diag{p}$.
We suggested two reconditioning strategies in such case. The first one is to take
$C = \Diag{p}^{-1}$. In such case $\mu_{C,T} = L_{C, T} = 1$ so that we have a perfect conditioning. However $L_C = \pmin^{-1}$ so that if $B \ll \pmin^{-1}$ we would have to take a much smaller step size and the term $\frac{1}{B}\esp{\norm{f'(w_*)}^2}_{\Diag{p}^{-1}}$ would explode.

The second one, $C := \Diag{\frac{1 - (1- p)^B}{p}}$ so that
$\mu_{C,T} = 1 - (1- \pmin)^B$ and $L_{C, T} = 1 - (1- \pmax)^B$.
Because the fonction $p \rightarrow 1 - (1- p)^B$ increases faster for small probabilities, the conditioning of the problem is improved. If $\pmax$ is close to 1 and $\pmin$ close to 0, then $\mu_{C,T} \approx B \pmin$ and $L_C \approx \pmax$.
We have $L_C \leq B$ as $\forall p\in [0, 1], (1 - (1-p)^B) \leq B p$, so that the increase due to $L_C$ is perfectly
balanced out by the batch size $B$ in \eqref{an:bound_gamma_reco}.
Besides, as $\frac{C}{B} \preceq \Id$, we have $\frac{1}{B}\esp{\norm{f'(w_*)}_{C}^2} \leq \esp{\norm{f'(w_*)}^2}$.

\ansubsection{Convergence of AdaBatch}
We now make the following assumptions:
\begin{assumption}
\label{an:assumptions_ab}
    We assume there exists a convex compact set $\mathcal{D}\subset \reel^d$, $\mu$, $L$ and $R$ strictly positive so that we have the following assumptions verified.
    \begin{enumerate}
        \item The Hessian $F''$ (resp $f''$) of $F$ (resp $f$) are bounded from above and below as:
        \begin{equation}
            \forall w \in \mathcal{D}, \ \  \mu \Diag{p} \preceq F''(w) \preceq L \Diag{p}
            \quad \text{and}\quad
            \forall w \in \mathcal{D},
            f''(w) \preceq R^2 \Id.
        \end{equation}
        \item Let $w_* := \argmin_{w \in \mathcal{D}} F(w)$,
        \begin{align}
        \label{an:eq:global_minimizer_ab}
            &F'(w_*) = 0.
        \end{align}
        In particular, $w_*$ is a global minimizer of $F$ over $\reel^d$.
    \end{enumerate}
\end{assumption}

We will now study convergence when we use the AdaBatch update. We are given $w_0 \in \reel^d$
and we define $C:=\Diag{\frac{1-(1-p)^B}{p}}$ and we define recursively,
\begin{align*}
    &\forall k \in [d], \gab{n}^k =
        \begin{cases}
            \frac{\sum_{b \in B^k_n} f'_{n,b}(w_{n-1})^k}{
                \sum_{b : k\in \Suppo{f}} 1} \quad &\text{if $\sum_{b : k\in \Suppo{f}} 1 \neq 0$}\\
            0 \quad &\text{otherwise}
        \end{cases}\\
    &w_n = \Pi_{\mathcal{D}, C}\left[w_{n-1} - \gamma \gab{n}\right],
\end{align*}
 where $\Pi_{\mathcal{D}, C}[w] := \argmin_{x\in \mathcal{D}}\norm{w - x}_{C^{-1}}^2$ is the projection on the set $\mathcal{D}$ with respect to $\norm{\cdot}_{C^{-1}}$. This extra step of projection is required for this proof technique but experience shows that it is not needed.
 We introduce $\forall p \in [0,1], p^{+B} := 1-(1-p)^B$.

\begin{thm}[Convergence of $F_n - F_*$ for AdaBatch]
\label{an:thm_convergence_ab}
If Assumptions~\ref{an:assumptions_ab} are verified and
\begin{align}
\label{an:eq:condition_step_size_ab}
    \gamma \left(L + 2 R^2\right) \leq 1,
\end{align}
then for any $N > 0$,
    \begin{equation}
    \label{an:eq:convergence_norm_ab}
        \norm{w_N - w_*}^2 \leq (1 - \gamma\pmin^{+B} \mu/2)^N \norm{w_0 - w_*}^2 + \frac{4\gamma}{\mu} \esp{\norm{f'(w_*)}^2},
    \end{equation}
    and introducing \begin{equation*}
        \bar{w}_N = \frac{\sum_{n\in[N]} (1 - \gamma \pmin^{+B}\mu/2)^{N-n}w_n}{
        \sum_{n\in[N]}(1 - \gamma \pmin^{+B}\mu/2)^{N-n}},
    \end{equation*} we have
    \begin{equation}
    \label{an:eq:convergence_f_ab}
        \esp{F(\bar{w}_N)} - F_* \leq
            \gamma^{-1}(1 - \gamma \pmin^{+B}\mu /2)^N \norm{w_0 - w_*}^2
            + 2 \gamma \esp{\norm{f'(w_*)}^2}.
    \end{equation}

\end{thm}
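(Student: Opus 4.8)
I would mirror the argument of Theorem~\ref{an:thm_convergence_mini_batch}, but work in the geometry induced by the reconditioning matrix $C := \Diag{\frac{1-(1-p)^B}{p}}$. Writing $\eta_n := w_n - w_*$, the plan is to take as Lyapunov function the weighted squared distance $\norm{\eta_n}_{C^{-1}}^2$, which is the natural choice since $\Pi_{\mathcal{D}, C}$ is a contraction for $\norm{\cdot}_{C^{-1}}$ and $w_* \in \mathcal{D}$. First I would expand
\[
\norm{\eta_n}_{C^{-1}}^2 \leq \norm{\eta_{n-1}}_{C^{-1}}^2 - 2\gamma\, \gab{n}^T C^{-1} \eta_{n-1} + \gamma^2 \norm{\gab{n}}_{C^{-1}}^2
\]
and take the conditional expectation given $\mathcal{F}_{n-1}$. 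The decisive simplification is that $\esp{\gab{n} \mid \mathcal{F}_{n-1}} = C\, F'(w_{n-1})$ by \eqref{eq:esp_gab}, so the $C$ in the mean cancels the $C^{-1}$ in the cross term and leaves exactly $F'(w_{n-1})^T \eta_{n-1}$, as in the unconditioned proof.

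The heart of the argument is the second-moment term. Applying Lemma~\ref{an:lemma:esp_var} coordinatewise with $N = B$, the $C^{-1}$-weighting is precisely tuned to cancel the $1/p^{(k)}$ factors produced by the lemma, yielding
\[
\esp{\norm{\gab{n}}_{C^{-1}}^2 \mid \mathcal{F}_{n-1}} \leq \esp{\norm{f'(w_{n-1})}^2 \mid \mathcal{F}_{n-1}} + \norm{F'(w_{n-1})}_{\Diag{p}^{-1}}^2 .
\]
I would then control each piece by co-coercivity: from $f''\preceq R^2 \Id$ and $\norm{a}^2 \leq 2\norm{a-b}^2 + 2\norm{b}^2$ one gets $\esp{\norm{f'(w_{n-1})}^2\mid\mathcal{F}_{n-1}} \leq 2R^2 F'(w_{n-1})^T\eta_{n-1} + 2\esp{\norm{f'(w_*)}^2}$, while from $F''\preceq L\Diag{p}$ the generalized co-coercivity gives $\norm{F'(w_{n-1})}_{\Diag{p}^{-1}}^2 \leq L\, F'(w_{n-1})^T\eta_{n-1}$. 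Collecting terms makes the coefficient of the cross term equal to $2 - \gamma(L + 2R^2)$, which is $\geq 1$ exactly under the step-size condition \eqref{an:eq:condition_step_size_ab}, leaving
\[
\esp{\norm{\eta_n}_{C^{-1}}^2 \mid \mathcal{F}_{n-1}} \leq \norm{\eta_{n-1}}_{C^{-1}}^2 - \gamma F'(w_{n-1})^T \eta_{n-1} + 2\gamma^2 \esp{\norm{f'(w_*)}^2}.
\]

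To extract the contraction rate I would invoke strong convexity in the $\Diag{p}$ geometry, $F_* - F(w_{n-1}) \geq F'(w_{n-1})^T(w_* - w_{n-1}) + \tfrac{\mu}{2}\norm{\eta_{n-1}}_{\Diag{p}}^2$, combined with the elementary inequality $\norm{\eta}_{\Diag{p}}^2 \geq \pmin^{+B}\norm{\eta}_{C^{-1}}^2$, which holds because $p \mapsto 1-(1-p)^B$ is increasing so $(p^{(k)})^{+B} \geq \pmin^{+B}$ on every coordinate. This yields the per-step recursion $\esp{\norm{\eta_n}_{C^{-1}}^2\mid\mathcal{F}_{n-1}} \leq (1 - \tfrac{\gamma\pmin^{+B}\mu}{2})\norm{\eta_{n-1}}_{C^{-1}}^2 - \gamma(F(w_{n-1}) - F_*) + 2\gamma^2\esp{\norm{f'(w_*)}^2}$, the exact analogue of \eqref{an:eq:proof_mb_full_decrease} with $\mu$ replaced by $\pmin^{+B}\mu$ and the batch factor $B$ replaced by $1$. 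From here the two claims follow precisely as in the mini-batch proof: dropping the non-negative $F(w_{n-1})-F_*$ term and summing the geometric series gives \eqref{an:eq:convergence_norm_ab}, whereas multiplying by $\alpha^{-n}$ with $\alpha = 1 - \tfrac{\gamma\pmin^{+B}\mu}{2}$, summing, and applying Jensen to the weighted average $\bar{w}_N$ gives \eqref{an:eq:convergence_f_ab}; the bounds are most naturally stated in the $\norm{\cdot}_{C^{-1}}$ norm, and since $C^{-1}\preceq\Id$ the displayed Euclidean form follows. I expect the main obstacle to be exactly the second-moment step: one must verify that $C$ is the unique matrix for which the $C^{-1}$-weighted variance of $\gab{n}$ collapses into a bounded Euclidean noise floor plus a $\Diag{p}^{-1}$-weighted gradient term amenable to co-coercivity, which is where the precise form of Lemma~\ref{an:lemma:esp_var} and the choice $C = \Diag{(1-(1-p)^B)/p}$ are indispensable.
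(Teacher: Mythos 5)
Your proposal follows essentially the same route as the paper's proof: the Lyapunov function $\norm{w_n - w_*}_{C^{-1}}^2$ with $C = \Diag{(1-(1-p)^B)/p}$, the cancellation of $C$ against $C^{-1}$ in the cross term, the second-moment bound from Lemma~\ref{an:lemma:esp_var} yielding $\esp{\norm{f'(w_{n-1})}^2} + \norm{F'(w_{n-1})}_{\Diag{p}^{-1}}^2$, the two co-coercivity bounds giving the coefficient $2 - \gamma(L+2R^2)$, and the strong-convexity step via $\norm{\cdot}_{\Diag{p}}^2 \geq \pmin^{+B}\norm{\cdot}_{C^{-1}}^2$. The only loose point is the passage from the $C^{-1}$-norm recursion to the Euclidean form of \eqref{an:eq:convergence_norm_ab} (where $C^{-1}\preceq\Id$ only controls the right-hand side, not the left), but the paper's own proof is silent on exactly the same step, so your argument is correct at the paper's level of rigor.
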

\begin{proof}
We introduce $\forall n \in [N], \mathcal{F}_{n-1}$ the $\sigma$-field generated by $(f_{i, b})_{i\in [n-1], b\in[B]}$. Let us take $n \in [N]$ and introduce
$\eta_n := w_n - w_*$.
We then proceed to bound $\norm{\eta_n}^2_{C^{-1}}$,
\begin{align*}
    \norm{\eta_n}_{C^{-1}}^2 &\leq \norm{\eta_{n-1} - \gamma g_n}_{C^{-1}}^2
    \quad \text{as $\Pi_{\mathcal{D}}$ is contractant for $\norm{\cdot}_{C^{-1}}$}\\
    &=
        \norm{\eta_{n-1}}_{C^{-1}}^2
        -2 \gamma g_n^T \eta_{n-1}
        + \gamma^2 \norm{g_n}_{C^{-1}}^2.
\end{align*}

Taking the expectation while conditioning on $\mathcal{F}_{n-1}$ we obtain
\begin{align}
\label{an:eq:ref_cond_esp_eta_ab}
    \esp{\norm{\eta_n}^2_{C^{-1}} | \mathcal{F}_{n-1}} &\leq
        \norm{\eta_{n-1}}^2_{C^{-1}}
        -2 \gamma F'(w_{n-1})^T\eta_{n-1}
        + \gamma^2 \esp{\norm{g_n}^2_{C^{-1}} | \mathcal{F}_{n-1}}.
\end{align}
Using Lemma~\ref{an:lemma:esp_var},
\begin{align}
\label{an:eq:easy_bound_on_ab_gradient}
    \esp{\norm{g_n}^2_{C^{-1}} | \mathcal{F}_{n-1}}
        &\leq \esp{\norm{f'(w_{n-1})}^2} + \norm{F'(w_{n-1})}_{\diag{p}^{-1}}^2.
\end{align}
Although we will not do it in the following, it is also possible to use Lemma~\ref{an:lemma:var_improved}.
Indeed, for any $k \in [d]$ such that $B p^{(k)} \geq 5$, we have
\begin{align*}
    \esp{\left(g_n^{(k)}\right)^2 C^{-1}_{k,k} | \mathcal{F}_{n-1}}
        &\leq \frac{5 \esp{(f'(w_{n-1})^{(k)})^2}}{N p} + \frac{(F'(w_{n-1})^{(k)})^2}{p^{(k)}},
\end{align*}
which would be equivalent for the dimension $k$ to having a regular batch size of $\frac{p^{(k)}B}{5}$.
This shows that AdaBatch will benefit from a reduced variance for features that are frequent enough.
For simplicity we will however stick with the simpler bound given by~\eqref{an:eq:easy_bound_on_ab_gradient}.

As $F'' \preceq L \diag{p} $ and using the co-coercivity of $F'$ we have
\begin{align*}
    \norm{F'(w_{n-1})}_{\diag{p}^{-1}}^2 &= \norm{F'(w_{n-1}) - F'(w_*)}_{\diag{p}^{-1}}^2 \\
        &\leq L  (F'(w_{n-1}) - F'(w_*))^T (w_{n-1} - w_*)\\
        &= L   F'(w_{n-1})^T (w_{n-1} - w_*).
\end{align*}
Injecting this in~\eqref{an:eq:ref_cond_esp_eta_ab} gives us
\begin{align}
\label{an:eq:before_convex_ab}
    \esp{\norm{\eta_n}^2 | \mathcal{F}_{n-1}} &\leq
        \norm{\eta_{n-1}}_{C^{-1}}^2
        -2 \gamma F'(w_{n-1})^T\eta_{n-1}
        + \gamma^2 \esp{\norm{f'(w_{n-1})}^2}.
\end{align}
We have
\begin{align*}
    \norm{f'(w_{n-1})}^2 &\leq   2\norm{f'(w_{n-1}) - f'(w_*)}^2 +   2\norm{f'(w_{*})}^2
    \leq 2R^2   (f'(w_{n-1}) - f'(w_*))^T(w_{n-1} - w_*) +   2\norm{f'(w_{*})}^2
\end{align*}
and
\begin{align*}
    \esp{\norm{f'(w_{n-1})}^2 | \mathcal{F}_{n-1}} &\leq
    2  R^2 F'(w_{n-1})^T(w_{n-1} - w_*)
    +   2\norm{f'(w_{*})}^2 .
\end{align*}
Injecting in~\eqref{an:eq:before_convex_ab} we get
\begin{align*}
    \esp{\norm{\eta_n}_{C^{-1}}^2 | \mathcal{F}_{n-1}} &\leq
        \norm{\eta_{n-1}}_{C^{-1}}^2
- \gamma F'(w_{n-1})^T\eta_{n-1} \underbrace{\left(
    2 - \gamma L - 2\gamma R^2
\right)}_{A}
        + 2\gamma^2\esp{\norm{f'(w_*)}^2}.
\end{align*}
We want $A$ to be large enough, we will take
\begin{align}
    \gamma \left(L + 2 R^2 \right) \leq 1,
\end{align}
which gives us $A \geq 1$ and
\begin{align*}
    \esp{\norm{\eta_n}^2_{C^{-1}} | \mathcal{F}_{n-1}} &\leq
        \norm{\eta_{n-1}}_{C^{-1}}^2
- \gamma F'(w_{n-1})^T\eta_{n-1}
        + 2 \gamma^2\esp{\norm{f'(w_*)}^2}.
\end{align*}
As $F'' \succeq \mu \Diag{p}$ we have
\begin{align*}
    F_* - F(w_{n-1}) &\geq F'(w_{n-1})^T(w_* - w_{n-1}) + \frac{\mu}{2} \norm{\eta_{n-1}}_{\Diag{p}}^2
    \\&\geq F'(w_{n-1})^T(w_* - w_{n-1}) + \frac{\pmin^{+B}\mu}{2} \norm{\eta_{n-1}}_{C^{-1}}^2,
\end{align*}
which allows to obtain
\begin{align}
    \esp{\norm{\eta_n}^2_{C^{-1}} | \mathcal{F}_{n-1}} &\leq
        (1 - \gamma \mu \pmin^{+B} / 2) \norm{\eta_{n-1}}_{C^{-1}}^2
        - \gamma (F(w_{n-1}) - F_*)
        + 2\gamma^2\esp{\norm{f'(w_*)}^2}.
\end{align}
Taking the full expectation gives us
\begin{align*}
    \esp{\norm{\eta_n}^2_{C^{-1}}} &\leq
        (1 - \gamma \mu \pmin^{+B}/ 2) \esp{\norm{\eta_{n-1}}_{C^{-1}}^2}
        - \gamma (\esp{F(w_{n-1})} - F_*)
        + 2\gamma^2\esp{\norm{f'(w_*)}^2},\\
        &\leq
        (1 - \gamma \mu \pmin^{+B}/ 2)^n \norm{\eta_0}_{C^{-1}}^2
        + 2\gamma^2\esp{\norm{f'(w_*)}^2} \sum_{0 \leq i < n} (1 - \gamma \mu \pmin^{+B}/ 2)^i\\
        &\leq (1 - \gamma \mu \pmin^{+B}/ 2)^n \norm{\eta_0}_{C^{-1}}^2
        + \frac{4 \gamma}{\mu} \esp{\norm{f'(w_*)}^2},
\end{align*}
which gives us~\eqref{an:eq:convergence_norm_ab}.
We obtain~\eqref{an:eq:convergence_f_ab} in the exact same way as in the proof of Theorem~\ref{an:thm_convergence_mini_batch}.
\end{proof}

\ansubsection{Sparse linear prediction}
\label{an:sec:sparse_linear_prediction}
We will now show that Assumption~\ref{an:assumptions_ab} is easy to meet in the case of linear predictions.
For simplicity, let us assume $X$ is a random variable with values in $\{0,1\}^d$
with uncorrelated features, i.e.,
\begin{align*}
\forall k, k' \in [d]: k \neq k', \esp{X^{(k)} X^{(k')}} &= \esp{X^{(k)}}{X^{(k')}},
\end{align*} and $\phi : \reel \rightarrow \reel$ a random
convex function.
Then one can take $f(w) := \phi(X^T w)$.
For $m$, $M$ and $G$ strictly positive and $\mathcal{D}\subset \reel^d$ a convex compact, We assume almost surely we have
\begin{align*}
    &\forall w \in \mathcal{D}, m \leq \phi''(w) \leq M,\\
    &\norm{X}^2 \leq G^2 \quad a.s.
\end{align*}
Then, for any $w \in \mathcal{D}$ we have
\begin{align*}
    F''(w) &= \esp{f''(w)}\\
        &= \esp{\phi''(w) X X^T}\\
        &\preceq M \esp{X X^T}\\
        &= M \left(\Diag{p} - \Diag{p}^2 + pp^T\right).
\end{align*}
Moreover, we have
\begin{align}
    pp^T &= \sqrt{\Diag{p}} \left(
        \sqrt{p}\sqrt{p}^T
    \right)\sqrt{\Diag{p}}.
\end{align}
As $\sqrt{p}\sqrt{p}^T \preceq \norm{\sqrt{p}}^2 \Id = \sum_{k\in[d]} p^{(k)} \Id$, we obtain
\begin{align*}
    F''(w) \preceq M \left(1 + \sum_{k\in[d]} p^{(k)}\right) \Diag{p}.
\end{align*}
Besides, we have
\begin{align*}
    F''(w) &\succeq m  \left(\Diag{p} - \Diag{p}^2\right)\\
    &\succeq m (1 - \pmax) \Diag{p}.
\end{align*}
Finally,
\begin{align*}
    f''(w) &= \phi''(w) X X^T\\
    &\preceq M G^2.
\end{align*}

As a conclusion, Assumptions~\ref{an:assumptions_ab} are verified for $\mu:= m (1- \pmax)$, $L:= M \left(1 + \sum_{k\in[d]} p^{(k)}\right)$ and $R^2 := G^2 M$.

\ansection{AdaBatch for SVRG}
\label{an:sec:proof_adabatch_svrg}

\ansubsection{Mini-batch SVRG}
We now only assume that $F$ verifies the following inequalities for $\mu > 0$,
\begin{equation}
\label{an:eq:hessian_svrg}
    \forall w \in \reel^d, \ \  \mu\preceq F''(w) \quad \text{and} \quad  f(w) \preceq L \;\text{a.s}.
\end{equation}

Let us take
a starting point $y_0 \in \reel^d$ and $m \in \mathbb{N}^*$. For all $s = 0, 1, \ldots$ we have
$w_{s, 0} := y_s$ and for all $n \in [m]$ let us define
\begin{align*}
    w_{s, n} &:= w_{s, n-1} - \gamma g_{s,n},\\
    y_{s + 1} &:= \frac{1}{m} \sum_{n\in[m]} w_{s, n}.
\end{align*}
with $g_{s,n}$ the SVRG update based on $(f_{s,n,b})_{b\in[B]}$ i.i.d samples of $f$.
Let us introduce
\begin{equation*}
\forall k \in [d], D^{(k)}_{s, n} := \left\{b \in [B]: k \in \Suppo{f_{s,n,b}'}\right\}.
\end{equation*}
For any dimension $k \in [d]$ we have
\begin{align*}
g_{s,n}^{(k)} := \frac{1}{B}\left(
\sum\limits_{b \in D^{(k)}_{s,n}} f_{s,n,b}'(w_{s, n-1})^{(k)} - f_{s,n,b}'(y_s)^{(k)}
+ F'(y_s)^{(k)} / p^{(k)}\right).
\end{align*}

\begin{thm}[Convergence of SVRG with mini-batch]
\label{an:thm_convergence_svrg_mb}
If Assumptions~\ref{an:eq:hessian_svrg} are verified and $\gamma$ verifies
\begin{align*}
    \gamma L \left(1 - \frac{1}{B}\right) < 1,
\end{align*}
then for all $s > 0$ we have
\begin{equation}
\esp{F(y_s) - F_*} \leq \alpha^s (F(y_0) - F_*)
\end{equation}
where
\begin{equation}
\label{an:eq:svrg_alpha}
\alpha :=
            \frac{1}{\mu \gamma \left(1 - \frac{\gamma L(3 + B)}{2 B}\right) m}
            + \frac{2 L \gamma}{B \left(1 - \gamma L \frac{(3 + B)}{2 B}\right)}.
\end{equation}
\end{thm}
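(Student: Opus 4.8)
The plan is to follow the classical SVRG analysis, adapted to the mini-batch and sparse setting, by controlling a single epoch and then iterating the resulting geometric contraction. Throughout a fixed epoch $s$ I would drop the index $s$, write $w_n := w_{s,n}$, $y := y_s$, and let $\mathcal{F}_{n-1}$ be the $\sigma$-field generated by the samples drawn up to step $n-1$. The first step is to check that the update is conditionally unbiased, $\esp{g_{s,n}^{(k)} \mid \mathcal{F}_{n-1}} = F'(w_{n-1})^{(k)}$. This is where the factor $1/p^{(k)}$ attached to the control variate $F'(y_s)^{(k)}$ plays its role: since the full-gradient term is only added on the random support $D^{(k)}_{s,n}$, its conditional expectation carries a factor $\esp{\abs{D^{(k)}_{s,n}}}/B = p^{(k)}$, which the $1/p^{(k)}$ rescaling cancels, exactly as quantified by Lemma~\ref{an:lemma:esp_var}.

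The crux of the proof is a second-moment bound of the form
\begin{equation*}
    \esp{\norm{g_{s,n}}^2 \mid \mathcal{F}_{n-1}} \leq \Big(1 - \tfrac{1}{B}\Big)\norm{F'(w_{n-1})}^2 + \tfrac{4L}{B}\big(F(w_{n-1}) - F_* + F(y) - F_*\big).
\end{equation*}
I would obtain it by writing $g_{s,n}$ as an average of $B$ i.i.d.\ unbiased terms, so that its second moment splits into $(1-1/B)\norm{F'(w_{n-1})}^2$ plus $1/B$ times the single-sample second moment; the latter is then controlled by the usual SVRG decomposition $v = (f'_b(w_{n-1}) - f'_b(w_*)) - (f'_b(y) - f'_b(w_*) - F'(y))$, bounding each piece through the smoothness inequality $\esp{\norm{f'_b(x) - f'_b(w_*)}^2} \leq 2L(F(x) - F_*)$ together with $F'(w_*)=0$. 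In the sparse case this smoothness step must be carried out in the $\Diag{p}$ geometry, so that the control-variate variance appears as $\norm{F'(y)}_{\Diag{p}^{-1}}^2$ and is absorbed by $f'' \preceq L\Diag{p}$.

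Plugging this into the exact one-step identity
\begin{equation*}
    \esp{\norm{w_n - w_*}^2 \mid \mathcal{F}_{n-1}} = \norm{w_{n-1} - w_*}^2 - 2\gamma F'(w_{n-1})^T(w_{n-1} - w_*) + \gamma^2 \esp{\norm{g_{s,n}}^2 \mid \mathcal{F}_{n-1}},
\end{equation*}
I would handle the leftover $\norm{F'(w_{n-1})}^2$ term not by convexity but by co-coercivity, $\norm{F'(w_{n-1})}^2 \leq L\,F'(w_{n-1})^T(w_{n-1}-w_*)$, exactly as in the proof of Theorem~\ref{an:thm_convergence_mini_batch}; combined with $F'(w_{n-1})^T(w_{n-1}-w_*) \geq F(w_{n-1}) - F_*$ this collapses the first-order and quadratic terms into a single coefficient $2\gamma\big(1 - \tfrac{\gamma L(3+B)}{2B}\big)$ multiplying $F(w_{n-1}) - F_*$, which is precisely the expression appearing in $\alpha$. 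Taking full expectations, summing over $n \in [m]$, and dropping the nonnegative final iterate leaves $\sum_n \esp{F(w_{n-1}) - F_*}$ bounded by $\norm{y - w_*}^2$ plus $\tfrac{4L\gamma^2 m}{B}\esp{F(y) - F_*}$. Jensen's inequality applied to $y_{s+1} = \tfrac1m\sum_n w_{s,n}$ converts the left-hand sum into $m\,\esp{F(y_{s+1}) - F_*}$, and strong convexity $\norm{y - w_*}^2 \leq \tfrac{2}{\mu}(F(y) - F_*)$ turns the starting distance into a multiple of $\esp{F(y) - F_*}$; dividing through by $2\gamma\big(1 - \tfrac{\gamma L(3+B)}{2B}\big)m$ yields the contraction factor $\alpha$ of \eqref{an:eq:svrg_alpha}, while the step-size condition $\gamma L(1-1/B) < 1$ guarantees the coefficient stays positive.

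The main obstacle is the second-moment bound: getting the sharp constant $(3+B)/(2B)$ rather than a looser $O(1)$ factor hinges on using co-coercivity in place of the cruder $\norm{F'(w_{n-1})}^2 \leq 2L(F(w_{n-1})-F_*)$, and on keeping the mini-batch variance-reduction split exact. A secondary difficulty, specific to the sparse aggregation, is verifying that the $1/p^{(k)}$-corrected control variate is both unbiased and has its variance controlled in the $\Diag{p}$ geometry, for which Lemma~\ref{an:lemma:esp_var} is the essential tool.
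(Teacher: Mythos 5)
Your proposal is correct and follows essentially the same route as the paper's proof: the same conditional-unbiasedness check via the $1/p^{(k)}$-corrected control variate, the same $(1-1/B)\norm{F'(w_{n-1})}^2 + \tfrac{1}{B}(\cdot)$ variance split with the smoothness bound $\esp{\norm{f'(x)-f'(w_*)}^2}\leq 2L(F(x)-F_*)$, co-coercivity to absorb the $\norm{F'(w_{n-1})}^2$ term, and the same summation/strong-convexity/Jensen endgame yielding the coefficient $2\gamma\bigl(1-\tfrac{\gamma L(3+B)}{2B}\bigr)$. No substantive differences to report.
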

\begin{proof}
We will reuse the proof technique from \cite[section 6.3]{bubeck2015convex}.
We introduce $\forall n \in [N], \mathcal{F}_{s, n-1}$ the $\sigma$-field generated by $(f_{u, i, b})_{u \in [s], i\in [n-1], b\in[B]}$.
For simplicity, we will drop all the $s$ indices.
We define $\Gamma_{n,b} := \diag{\left(\mathbbm{1}_{k \in \Suppo{f_{n,b}}} / p^{(k)}\right)_{k\in [d]}}$ and $\Gamma := \diag{\left(\mathbbm{1}_{k \in \Suppo{f}} / p^{(k)}\right)_{k\in [d]}}$ so that
\begin{align*}
    g_{n} = \frac{1}{B}\left(\sum_{b\in [B]} f_{n,b}'(w_{n-1}) - f_{n,b}'(y)
+ \Gamma_{n, b} F'(y_s) \right).
\end{align*}

One can immediately notice that
\begin{align*}
    \esp{g_n | \mathcal{F}_{n-1}}
    = F'(w_{n-1}).
\end{align*}
Besides, we have
\begin{align*}
    \esp{\norm{f'(w_{n-1}) - f'(y) + \Gamma F'(y)}^2 | \mathcal{F}_{n-1}} &\leq
    2 \esp{\norm{f'(w_{n-1}) - f'(w_*)}^2 +
    \norm{f'(y) - f'(w_*) + \Gamma F'(y)}^2 | \mathcal{F}_{n-1}}.
\end{align*}
We reuse the same proof as in \cite[Lemma 10]{hogwild_mania}.
Using the fact that $\esp{(f'(y) - f'(w_*))^T \Gamma F'(y)|\mathcal{F}_{n-1}} = \norm{F'(y)}^2_{\diag{p}^{-1}}$
and $\esp{\norm{\Gamma F'(y)}^2|\mathcal{F}_{n-1}} = \norm{F'(y)}^2_{\diag{p}^{-1}}$ we have
\begin{align*}
    \esp{A^{(k)} | \mathcal{F}_{n-1}} &=
    \esp{\norm{f'(y) - f'(w_*) }^2 | \mathcal{F}_{n-1}}
    - 2 \norm{F'(y)}^2_{\diag{p}^{-1}} + \norm{F'(y)}^2_{\diag{p}^{-1}}\\
    &\leq \esp{\norm{f'(y) - f'(w_*) }^2 | \mathcal{F}_{n-1}}.
\end{align*}
It follows that
\begin{align*}
    \esp{\norm{g_{n}}^2 | \mathcal{F}_{n-1}} &=
        \frac{1}{B} \esp{\norm{f'(w_{n-1}) - f'(y) + \Gamma F'(y)}^2 | \mathcal{F}_{n-1}}
        + \norm{F'(w_{t-1})}^2 \left(1 - \frac{1}{B}\right) \\
    &\leq \frac{2}{B}\left(
        \esp{\norm{f'(w_{n-1}) - f'(w_*)}^2 + \norm{f'(y) - f'(w_*)}^2 | \mathcal{F}_{n-1}}
        \right) + \norm{F'(w_{t-1})}^2 \left(1 - \frac{1}{B}\right)\\
    &\leq \frac{4 L}{B} (F(w_{n-1}) - F_* + F(y) - F_*) + \norm{F'(w_{t-1})}^2 \left(1 - \frac{1}{B}\right),
\end{align*}
using Lemma 6.4 from \cite{bubeck2015convex}. We also have
\begin{align*}
    \norm{F'(w_{t-1})}^2 &\leq L   F'(w_{n-1})^T (w_{n-1} - w_*),
\end{align*}
so that
\begin{align*}
    \esp{\norm{w_{n} - w_*}^2 | \mathcal{F}_{n-1}}
    \leq &\norm{w_{n-1} - w_*}^2 - 2 \gamma \left(1 - \frac{\gamma L}{2} \left(1 - \frac{1}{B}\right) \right) F'(w_{n-1})^T (w_{n-1} - w_*) \\
        &+ \frac{4 \gamma^2 L}{B} (F(w_{n-1}) - F_* + F(y) - F_*).
\end{align*}
We choose $\gamma$ so that
\begin{align*}
    \gamma L \left(1 - \frac{1}{B}\right) < 1,
\end{align*}
and using that $F'(w_{n-1})^T (w_{n-1} - w_*) \geq F(w_{n-1}) - F_*$ we have
\begin{align*}
    \esp{\norm{w_{n} - w_*}^2 | \mathcal{F}_{n-1}}
    \leq &\norm{w_{n-1} - w_*}^2 - \gamma \left(2 - \frac{\gamma L(3 + B)}{B} \right) \left(F(w_{n-1}) - F_*\right)
        + \frac{4 \gamma^2 L}{B} \left(F(y) - F_*\right).
\end{align*}
Summing the above inequality for $n \in [m]$ and taking the expectation with respect to $\mathcal{F}_0$,
\begin{align*}
    \esp{\norm{w_m - w_*}^2} \leq \norm{y - w_*} - \gamma \left(2 - \frac{\gamma L(3 + B)}{B} \right)
        \esp{\sum_{n \in [m]} F(w_n) - F_*| \mathcal{F}_0} + \frac{4 L \gamma^2 m}{B} (F(y) - F_*).
\end{align*}
Using the strong convexity of $F$ we have $\norm{w_0 - w_*}^2 \leq \frac{2}{\mu} \left(F(y) - F_*\right)$
and finally
\begin{align*}
    \esp{F\left(\frac{1}{m}\sum_{n \in [m]} w_n\right)
        - F_* | \mathcal{F}_0} \leq
            \left(
                \frac{1}{\mu \gamma \left(1 - \frac{\gamma L(3 + B)}{2 B}\right) m}
                + \frac{2 L \gamma}{B \left(1 - \gamma L \frac{(3 + B)}{2 B}\right)}
            \right) (F(y) - F(w_*)).
\end{align*}
\end{proof}

One can derive a simplified convergence result when we assume $B$ large enough.
\begin{corollary}
\label{an:corollary_svrg_mb}
If we assume $B \gg 1$, then with $\gamma = \frac{1}{L}$ and
$m = \frac{2 B L}{\mu (0.9 B - 4)} \approx \frac{2.2 L }{\mu}$,
we have
\begin{align*}
    \esp{F(y_s) - F_*} \leq 0.9^s (F(y_0) - F_*).
    \end{align*}
\end{corollary}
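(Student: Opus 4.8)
The plan is to obtain the corollary as an immediate specialization of Theorem~\ref{an:thm_convergence_svrg_mb}: I would substitute the proposed values $\gamma = \frac{1}{L}$ and $m = \frac{2BL}{\mu(0.9B - 4)}$ into the contraction factor~\eqref{an:eq:svrg_alpha} and check that it does not exceed $0.9$ once $B$ is large. First I would verify that the step-size hypothesis of the theorem holds, since with $\gamma = \frac{1}{L}$ we have $\gamma L\left(1 - \frac{1}{B}\right) = 1 - \frac{1}{B} < 1$; the theorem therefore applies and already delivers $\esp{F(y_s) - F_*} \leq \alpha^s (F(y_0) - F_*)$ with $\alpha$ as in~\eqref{an:eq:svrg_alpha}. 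All that then remains is to bound $\alpha$, which is pure algebra.

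The key simplification is that for $\gamma = \frac{1}{L}$ the factor common to both denominators in~\eqref{an:eq:svrg_alpha} becomes
\begin{align*}
    1 - \frac{\gamma L (3 + B)}{2B} = 1 - \frac{3 + B}{2B} = \frac{B - 3}{2B}.
\end{align*}
Inserting this into the second (variance) term of~\eqref{an:eq:svrg_alpha} yields $\frac{2 L \gamma}{B}\cdot\frac{2B}{B-3} = \frac{4}{B-3}$, which is $O(1/B)$ and hence negligible for $B \gg 1$. For the first (bias) term the same substitution gives $\frac{1}{\mu\gamma}\cdot\frac{2B}{B-3}\cdot\frac{1}{m} = \frac{2BL}{\mu(B-3)\,m}$, and plugging in $m = \frac{2BL}{\mu(0.9B-4)}$ collapses this to $\frac{0.9B - 4}{B-3}$.

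Summing the two contributions I expect to find
\begin{align*}
    \alpha = \frac{0.9B - 4}{B-3} + \frac{4}{B-3} = \frac{0.9\,B}{B - 3},
\end{align*}
which tends to $0.9$ as $B \to \infty$. This also explains the approximation in the statement, since $m = \frac{2L}{\mu}\cdot\frac{B}{0.9B-4} \to \frac{2L}{0.9\,\mu} \approx 2.2\,\frac{L}{\mu}$ for $B \gg 1$. Once $\alpha \leq 0.9$ is established, the conclusion $\esp{F(y_s) - F_*} \leq 0.9^s(F(y_0) - F_*)$ follows directly from the theorem via $\alpha^s \leq 0.9^s$.

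There is no genuine analytic obstacle here; the only point requiring care is that $\alpha = \frac{0.9B}{B-3}$ is in fact strictly larger than $0.9$ for every finite $B$ and converges to $0.9$ only from above as $B \to \infty$. The stated bound $\alpha \leq 0.9$ must therefore be read in the asymptotic sense carried by the hypothesis $B \gg 1$, rather than as an exact inequality for finite $B$; pushing the constant strictly below $0.9$ for finite $B$ would merely require enlarging $m$ slightly, for instance by replacing $0.9B - 4$ with a marginally smaller linear term in the denominator.
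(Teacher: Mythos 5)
Your proof is correct and matches the paper's (implicit) derivation: the corollary is obtained by direct substitution of $\gamma = 1/L$ and the given $m$ into the contraction factor~\eqref{an:eq:svrg_alpha} of Theorem~\ref{an:thm_convergence_svrg_mb}, exactly as you do. Your remark that the resulting $\alpha = \frac{0.9\,B}{B-3}$ exceeds $0.9$ for every finite $B$ and only tends to $0.9$ as $B \to \infty$ is accurate — the paper's constant $0.9B-4$ arises from the further approximation $1 - \frac{\gamma L(3+B)}{2B} \approx \frac{1}{2}$ valid for $B \gg 1$, so the bound is indeed to be read asymptotically, as you note.
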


\ansubsection{AdaBatch SVRG}
Let us now assume that $F$ verify the following inequalities for $\mu > 0$,
\begin{equation}
\label{an:eq:hessian_svrg_adabatch}
    \forall w \in \reel^d, \ \  \mu \diag{p}\preceq F''(w) \quad \text{and} \quad  f(w) \preceq L \diag{p}\;\text{a.s}.
\end{equation}

We now define for any dimension $k$ such that $D^{(k)}_{s, n}\neq \emptyset$,
\begin{align*}
g_{s,n}^{(k)} := \frac{1}{\abs{D^{(k)}_{s,n}}}\left(
\sum\limits_{b \in D^{(k)}_{s,n}} f_{s,n,b}'(w_{s, n-1})^{(k)} - f_{s,n,b}'(y_s)^{(k)}
+ F'(y_s)^{(k)} / p^{(k)}\right),
\end{align*}
and $g_{s,n}^{(k)} := 0 $ otherwise.

\begin{thm}[Convergence of SVRG with AdaBatch]
\label{an:thm_convergence_svrg_ab}
If Assumptions~\ref{an:eq:hessian_svrg_adabatch} are verified and $\gamma$ verifies
\begin{align*}
    \gamma < \frac{L}{2},
\end{align*}
then for all $s > 0$ we have
\begin{equation}
\esp{F(y_s) - F_*} \leq \alpha^s (F(y_0) - F_*),
\end{equation}
where
\begin{equation}
\label{an:eq:svrg_alpha_ab}
\alpha :=
                \frac{1}{\mu (1 - (1-\pmin)^B) \gamma \left(1 - 2 \gamma L\right) m}
                + \frac{2 L \gamma}{1 - 2 \gamma L}.
\end{equation}
\end{thm}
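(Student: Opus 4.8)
The plan is to mirror the reconditioned analysis of Theorem~\ref{an:thm_convergence_ab} (AdaBatch SGD) and graft onto it the variance-reduction bookkeeping of Theorem~\ref{an:thm_convergence_svrg_mb} (mini-batch SVRG). Throughout I drop the epoch index $s$, write $y := y_s$, $\eta_n := w_n - w_*$, and set $C := \Diag{\frac{1 - (1-p)^B}{p}}$; progress will be measured in the $\norm{\cdot}_{C^{-1}}$ geometry, exactly as in the AdaBatch SGD proof, since this is the geometry in which the reconditioned update behaves like an unbiased gradient step.

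First I would compute the conditional expectation and second moment of $g_{s,n}$ coordinatewise via Lemma~\ref{an:lemma:esp_var}. Writing $G_b := f'_{n,b}(w_{n-1}) - f'_{n,b}(y) + \Gamma_{n,b} F'(y)$ with $\Gamma_{n,b} := \Diag{\left(\mathbbm{1}_{k \in \Suppo{f_{n,b}}}/p^{(k)}\right)_{k\in[d]}}$, each coordinate of $g_{s,n}$ is the average of the active copies $G_b^{(k)}$, and since $\esp{G_b} = F'(w_{n-1})$ the lemma gives $\esp{g_{s,n} \mid \mathcal{F}_{n-1}} = C F'(w_{n-1})$. Consequently $\esp{\eta_{n-1}^T C^{-1} g_{s,n} \mid \mathcal{F}_{n-1}} = F'(w_{n-1})^T \eta_{n-1}$, so the cross term collapses to the true gradient inner product, just as in~\eqref{an:eq:ref_cond_esp_eta_ab}. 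The second-moment bound~\eqref{an:eq:var_sum_z_simple} of the same lemma, after multiplying coordinate $k$ by $C^{-1}_{k,k}$ and summing, yields $\esp{\norm{g_{s,n}}^2_{C^{-1}} \mid \mathcal{F}_{n-1}} \leq \esp{\norm{G_b}^2 \mid \mathcal{F}_{n-1}} + \norm{F'(w_{n-1})}^2_{\Diag{p}^{-1}}$.

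The SVRG variance $\esp{\norm{G_b}^2}$ is then controlled exactly as in Theorem~\ref{an:thm_convergence_svrg_mb}: a Young split isolates $f'(w_{n-1}) - f'(w_*)$ from the anchor $f'(w_*) - f'(y) + \Gamma F'(y)$, and the reweighted control variate reduces the anchor variance (the step using $\esp{(f'(y)-f'(w_*))^T \Gamma F'(y) \mid \mathcal{F}_{n-1}} = \norm{F'(y)}^2_{\Diag{p}^{-1}} = \esp{\norm{\Gamma F'(y)}^2 \mid \mathcal{F}_{n-1}}$, borrowed from~\cite{hogwild_mania}). Co-coercivity in the $\Diag{p}$-geometry, which follows from $f'' \preceq L \Diag{p}$ almost surely together with $\norm{\cdot}^2 \leq \norm{\cdot}^2_{\Diag{p}^{-1}}$, then turns each factor into a function gap, giving $\esp{\norm{G_b}^2 \mid \mathcal{F}_{n-1}} \leq 4L(F(w_{n-1}) - F_* + F(y) - F_*)$; the same co-coercivity bounds $\norm{F'(w_{n-1})}^2_{\Diag{p}^{-1}} \leq L F'(w_{n-1})^T(w_{n-1}-w_*)$. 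Substituting these into the expansion of $\norm{\eta_n}^2_{C^{-1}}$ and using convexity $F'(w_{n-1})^T(w_{n-1}-w_*) \geq F(w_{n-1}) - F_*$, together with the step-size restriction, should produce a per-iteration inequality of the form $\esp{\norm{\eta_n}^2_{C^{-1}} \mid \mathcal{F}_{n-1}} \leq \norm{\eta_{n-1}}^2_{C^{-1}} - 2\gamma(1 - 2\gamma L)(F(w_{n-1}) - F_*) + 4\gamma^2 L (F(y) - F_*)$.

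Finally I would telescope this over $n \in [m]$, drop the nonnegative $\esp{\norm{\eta_m}^2_{C^{-1}}}$, and invoke convexity of $F$ through $y_{s+1} = \frac1m \sum_{n\in[m]} w_{s,n}$ to replace the averaged gap by $\esp{F(y_{s+1}) - F_*}$. The initial term $\norm{\eta_0}^2_{C^{-1}}$ is handled by noting $C^{-1} \preceq (\pmin^{+B})^{-1} \Diag{p}$ (since $p \mapsto 1-(1-p)^B$ is increasing, so $p^{(k)+B} \geq \pmin^{+B}$) and then using $\mu$-strong convexity, $\norm{y - w_*}^2_{\Diag{p}} \leq \frac{2}{\mu}(F(y) - F_*)$; dividing through by $2\gamma(1-2\gamma L)m$ yields precisely the claimed $\alpha$, and iterating over $s$ closes the argument. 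I expect the main obstacle to be the simultaneous handling of the three sources of randomness in $\esp{\norm{g_{s,n}}^2_{C^{-1}}}$ — the per-coordinate averaging over the random active set $D^{(k)}_{s,n}$ (Lemma~\ref{an:lemma:esp_var}), the SVRG control-variate cancellation, and the reconditioning by $C$ — and in particular tracking the constants carefully enough to land the contraction coefficient $(1 - 2\gamma L)$ and the stated step-size condition.
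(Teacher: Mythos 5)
Your route is the paper's route: measure progress in the $\norm{\cdot}_{C^{-1}}$ geometry with $C=\Diag{(1-(1-p)^B)/p}$, apply Lemma~\ref{an:lemma:esp_var} coordinatewise to the SVRG direction $G_b$, control $\esp{\norm{G_b}^2}$ exactly as in the mini-batch SVRG proof (including the $\Gamma F'(y)$ cancellation borrowed from \cite{hogwild_mania}), telescope over the epoch, and convert $\norm{y-w_*}^2_{C^{-1}}$ to a function gap via $C^{-1}\preceq(\pmin^{+B})^{-1}\Diag{p}$ and strong convexity. The one place you diverge is instructive: applying \eqref{an:eq:var_sum_z_simple} and multiplying by $C^{-1}_{k,k}$ genuinely leaves the extra term $\norm{F'(w_{n-1})}^2_{\Diag{p}^{-1}}$ that you retain (as the paper itself does in the SGD analysis, cf.~\eqref{an:eq:easy_bound_on_ab_gradient}), whereas the paper's SVRG proof silently drops it and bounds $\esp{\norm{g_n}^2_{C^{-1}}}$ by $\esp{\norm{G_b}^2}$ alone. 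Keeping it and absorbing it via co-coercivity, as you propose, costs an additional $\gamma^2 L\, F'(w_{n-1})^T\eta_{n-1}$, so your per-iteration inequality comes out with coefficient $2\gamma(1-\tfrac{5}{2}\gamma L)$ rather than the $2\gamma(1-2\gamma L)$ you (and the paper) state; the obstacle you flag at the end is real, and the stated $\alpha$ is reached only by discarding that term. This only perturbs constants — with $1-\tfrac{5}{2}\gamma L$ in place of $1-2\gamma L$ the choice $\gamma=\tfrac{1}{10L}$, $m\approx\tfrac{20L}{B\pmin\mu}$ in Corollary~\ref{an:corollary_svrg_ab} still yields a rate below $0.9$, so your argument proves the theorem up to these harmless constant changes (and note the step-size condition should read $\gamma<\tfrac{1}{2L}$, not $\gamma<\tfrac{L}{2}$).
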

\begin{proof}
We reuse the same proof technique as previously and introduce the same operators $\Gamma$ and $\Gamma_{n,b}$, again dropping all $s$ indices for simplicity.

We introduce $C:=\Diag{\frac{1-(1-p)^B}{p}}$ and using Lemma~\ref{an:lemma:esp_var} we have
\begin{align*}
    \esp{\norm{g_{n}}_{C^{-1}}^2 | \mathcal{F}_{n-1}}
    &\leq \esp{\norm{f'(w_{n-1}) - f'(y) + \Gamma F'(y)}^2 | \mathcal{F}_{n-1}}\\
    &\leq 4 L (F(w_{n-1}) - F_* + F(y) - F_*),
\end{align*}
using similar arguments as for regular mini-batch.
Therefore, we have
\begin{align*}
    \esp{\norm{w_{n} - w_*}_{C^{-1}}^2 | \mathcal{F}_{n-1}}
    \leq &\norm{w_{n-1} - w_*}_{C^{-1}}^2 - 2 \gamma F'(w_{n-1})^T (w_{n-1} - w_*)
        + 4 \gamma^2 L (F(w_{n-1}) - F_* + F(y) - F_*)\\
    \leq &\norm{w_{n-1} - w_*}_{C^{-1}}^2 - 2 \gamma \left(1 - 2 \gamma L \right) \left(F(w_{n-1}) - F_*\right)
        + 4 \gamma^2 L \left(F(y) - F_*\right).
\end{align*}
Summing the above inequality for $n \in [m]$ and taking the expectation with respect to $\mathcal{F}_0$,
\begin{align*}
    \esp{\norm{w_m - w_*}^2_{C^{-1}}} \leq \norm{y - w_*}_{C^{-1}} - 2 \gamma \left(1 - 2 \gamma L \right)
        \esp{\sum_{n \in [m]} F(w_n) - F_*| \mathcal{F}_0} + 4 L \gamma^2 m (F(y) - F_*).
\end{align*}
Using the strong convexity of $F$ we have
\begin{align*}
\norm{w_0 - w_*}_{C^{-1}} &\leq \frac{1}{1 - (1-\pmin)^B}\norm{w_0 - w_*}_{\diag{p}}^2 \\
&\leq \frac{2}{\mu (1 - (1-\pmin)^B)} \left(F(y) - F_*\right),
\end{align*}
and finally
\begin{align*}
    \esp{F\left(\frac{1}{m}\sum_{n \in [m]} w_n\right)
        - F_* | \mathcal{F}_0} \leq
            \left(
                \frac{1}{\mu (1 - (1-\pmin)^B)\gamma \left(1 - 2 \gamma L\right) m}
                + \frac{2 L \gamma}{1 - 2 \gamma L}
            \right) (F(y) - F(w_*)).
\end{align*}
\end{proof}

One can derive a simplified convergence result when we assume $\pmin$ small enough.
\begin{corollary}
\label{an:corollary_svrg_ab}
If we assume $\pmin \ll 1$ so that $(1 - (1 - \pmin)^B) \approx B \pmin$ then with $\gamma = \frac{1}{10 L}$ and
$m = \frac{20 L}{B \pmin \mu}$,
we have
\begin{align*}
    \esp{F(y_s) - F_*} \leq 0.9^s (F(y_0) - F_*).
\end{align*}
\end{corollary}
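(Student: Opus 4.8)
The plan is to read off the corollary as a direct specialization of Theorem~\ref{an:thm_convergence_svrg_ab}, whose entire content is the geometric bound $\esp{F(y_s) - F_*} \leq \alpha^s (F(y_0) - F_*)$ with the explicit contraction factor $\alpha$ of~\eqref{an:eq:svrg_alpha_ab}. Since the corollary merely fixes $\gamma = \frac{1}{10 L}$ and $m = \frac{20 L}{B \pmin \mu}$, all that remains is to substitute these choices into $\alpha$ and verify $\alpha \leq 0.9$; the decay $\alpha^s \leq 0.9^s$ then follows at once because $\alpha \leq 0.9$. First I would check the admissibility hypothesis $\gamma < \frac{1}{2L}$ of the theorem: with $\gamma = \frac{1}{10 L}$ one has $2 \gamma L = \tfrac{1}{5}$, so $1 - 2 \gamma L = \tfrac{4}{5} > 0$ and the step size is valid.

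Next I would evaluate the two terms of $\alpha$ in turn. The second term $\frac{2 L \gamma}{1 - 2 \gamma L}$ depends only on $\gamma$, and substituting gives $\frac{1/5}{4/5} = \frac{1}{4}$. For the first term I would precompute the product $\gamma m = \frac{2}{B \pmin \mu}$, and then invoke the stated approximation $1 - (1 - \pmin)^B \approx B \pmin$ to obtain $\mu\,(1 - (1-\pmin)^B)\,\gamma m \approx 2$. Folding in the factor $1 - 2 \gamma L = \tfrac{4}{5}$ yields a first term of roughly $\frac{1}{(4/5)\cdot 2} = \frac{5}{8}$. Adding the two pieces gives $\alpha \approx \frac{5}{8} + \frac{1}{4} = \frac{7}{8} = 0.875 \leq 0.9$, which is exactly the claim.

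The one point deserving real care — and the place I expect the only friction — is the approximation $1 - (1 - \pmin)^B \approx B \pmin$. Since $1 - (1-p)^B \leq B p$ holds for all $p \in [0,1]$ (as used earlier in the supplement), replacing $1 - (1-\pmin)^B$ by the larger quantity $B \pmin$ shrinks the denominator's factor and hence \emph{underestimates} the first term, so the substitution is an approximation rather than a rigorous upper bound. Making it honest requires the regime $\pmin \ll 1$ together with $B \pmin$ kept moderate, so that the next term $\binom{B}{2} \pmin^2$ in the expansion of $1 - (1-\pmin)^B$ is negligible against $B \pmin$; in that regime the gap between $0.875$ and $0.9$ is precisely the margin that absorbs the approximation error. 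As the corollary is explicitly a simplified statement, I would present it at this level, flagging that the substantive conclusion extracted from the theorem is the scaling $m \propto 1/B$, and hence the invariance of the total sample count in $B$.
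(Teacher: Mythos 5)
Your proposal is correct and matches the paper's (implicit) argument: the corollary is intended as a direct substitution of $\gamma = \frac{1}{10L}$ and $m = \frac{20L}{B\pmin\mu}$ into the rate $\alpha$ of Theorem~\ref{an:thm_convergence_svrg_ab}, and your arithmetic ($\frac{5}{8} + \frac{1}{4} = 0.875 \leq 0.9$) is right. Your remark that $1-(1-\pmin)^B \leq B\pmin$ makes the substitution an underestimate of $\alpha$ rather than a rigorous bound, absorbed by the $0.875$-versus-$0.9$ margin only when $B\pmin$ is small, is a fair and accurate reading of the regime in which the corollary is stated.
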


\ansubsection{Comparing the effect of regular mini-batch and AdaBatch for SVRG}

If $F$ verifies our sparse convexity condition
\begin{equation}
\label{an:eq:hessian_svrg_adabatch_2}
    \forall w \in \reel^d, \ \  \mu \diag{p}\preceq F''(w) \preceq L \diag{p},
\end{equation}
we can apply Theorem~\ref{an:thm_convergence_svrg_mb} for
\begin{equation}
\label{an:eq:hessian_svrg_mb_2}
    \forall w \in \reel^d, \ \  \mu \pmin\preceq F''(w) \preceq L,
\end{equation}

We will assume that the batch size $B$ is large enough (for instance $B=50$), $\pmin$ is small enough so that $1 - (1-\pmin)^B \approx B \pmin$.
Then using corollary~\ref{an:corollary_svrg_mb}, in order to achieve a linear convergence rate of 0.9 for regular mini-batch
we would need to have a number of inner iterations given by
\[
    m_{\textrm{mb}} \approx \frac{2.2 L}{\pmin \mu}.
\]
This number is roughly constant with the batch size. However the cost of each single iteration
is now $B$ times larger, thus meaning that we would need to process $B$ times more samples
before reaching the same accuracy as when $B = 1$.

On the other hand, using Corollary~\ref{an:corollary_svrg_ab}, in order to achieve the same
rate of convergence, we would require the number of inner iterations to be
\[
m_{\textrm{ab}} \approx \frac{20 L}{B \pmin \mu},
\]
thus the number of inner iterations is inversely proportionnal to the batch size, which
balances perfectly the increased cost of each iteration. We will reach the same accuracy as for $B = 1$
without requiring to process more samples.

It should be noted that using Lemma~\ref{an:lemma:var_improved}, it is possible to show that
AdaBatch also benefits from variance reduction for the coordinates where $p^{(k)} B$ is large enough.
This will depend on the datasets but we have observed such an effect in practice, which allows us
to take a larger step-size and further improve convergence.

As for regular SGD, we have noticed experimentally that mini-batch SVRG will become more efficient than AdaBatch when we are close to the optimum.
For instance, on datasets that are
much smaller than \emph{url} such as \emph{news20} or \emph{spam}, we observed that mini-batch SVRG will perform better than AdaBatch. Therefore, we would advice using AdaBatch for early optimization and regular mini-batch for fine tuning when close to the optimum.


\ansection{Experimental results}
\label{an:sec:experimental_results}
\ansubsection{Experimental results for AdaBatch Wild}
We present here the same graphs as in the main paper but for the \emph{spam} and \emph{url} dataset. We also provide the convergence with respect to the number of samples for \emph{news20}.
On both datasets, AdaBatch performs competitively with Hogwild! and significantly better than
mini-batch SGD, especially when increasing the number of workers and batch-size.

\begin{figure}
\centering
  \begin{subfigure}[t]{.58\linewidth}
    \centering
  \begin{tikzpicture}[
    ab/.style={color=blue, solid,mark options={fill=blue}},
    mb/.style={color=red, solid,mark options={fill=red}},
    seq/.style={color=green,solid,mark options={fill=green}},
    hw/.style={color=black, solid,mark options={fill=black}},
    every node/.style={font=\fontsize{8}{5}\selectfont}
  ]
    \begin{loglogaxis}[
        xlabel={time (in sec)},
        ylabel={$F_n - F_*$},
        legend entries={
            {{AB $W=2$},
             {AB $W=6$},
             {MB $W=2$},
             {MB $W=6$},
             {SEQ $W=1$},
             {HW $W=2$},
             {HW $W=6$},
             }
        },
        legend style={
            font=\fontsize{5}{5}\selectfont,at={(1.045,0.5)},anchor=west,
            legend columns=1},
        style={font=\footnotesize},
        width=0.7\textwidth,
        skip coords between index={0}{1}]
        \def\myroot{tables__parallel__news20}

        \pgfplotstableread{\myroot__convergence_depth_total_primal_test_group=ab=10,2.table}\tablea
        \addplot+[ab,mark=*] table[x=x, y=y, y error=y_error] {\tablea};

        \pgfplotstableread{\myroot__convergence_depth_total_primal_test_group=ab=10,6.table}\tablea
        \addplot+[ab, mark=diamond*] table[x=x, y=y, y error=y_error] {\tablea};

        \pgfplotstableread{\myroot__convergence_depth_total_primal_test_group=mb=10,2.table}\tablea
        \addplot+[mb,mark=*] table[x=x, y=y, y error=y_error] {\tablea};

        \pgfplotstableread{\myroot__convergence_depth_total_primal_test_group=mb=10,6.table}\tablea
        \addplot+[mb, mark=diamond*] table[x=x, y=y, y error=y_error] {\tablea};

                \pgfplotstableread{\myroot__convergence_depth_total_primal_test_group=hw=20,1.table}\tablea
        \addplot+[seq, mark=square*] table[x=x, y=y, y error=y_error] {\tablea};

        \pgfplotstableread{\myroot__convergence_depth_total_primal_test_group=hw=20,2.table}\tablea
        \addplot+[hw,mark=*] table[x=x, y=y, y error=y_error] {\tablea};

        \pgfplotstableread{\myroot__convergence_depth_total_primal_test_group=hw=20,6.table}\tablea
        \addplot+[hw,mark=diamond*] table[x=x, y=y, y error=y_error] {\tablea};
    \end{loglogaxis}
\end{tikzpicture}
\end{subfigure}%
\hfill
\begin{subfigure}[t]{0.40\textwidth}
  \centering
  \begin{tikzpicture}[
    ab/.style={color=blue, solid,mark options={fill=blue}},
    mb/.style={color=red, solid,mark options={fill=red}},
    seq/.style={color=green,solid,mark options={fill=green}},
    hw/.style={color=black, solid,mark options={fill=black}},
    every node/.style={font=\fontsize{8}{5}\selectfont}
  ]
    \begin{loglogaxis}[
        xlabel={samples},
        ylabel={$F_n - F_*$},
        yticklabel pos=right,
        style={font=\footnotesize},
        width=\textwidth,
        skip coords between index={0}{1}]
        \def\myroot{tables__parallel__news20}

        \pgfplotstableread{\myroot__convergence_total_primal_test_group=ab=10,2.table}\tablea
        \addplot+[ab,mark=*] table[x=x, y=y, y error=y_error] {\tablea};

        \pgfplotstableread{\myroot__convergence_total_primal_test_group=ab=10,6.table}\tablea
        \addplot+[ab, mark=diamond*] table[x=x, y=y, y error=y_error] {\tablea};

        \pgfplotstableread{\myroot__convergence_total_primal_test_group=mb=10,2.table}\tablea
        \addplot+[mb,mark=*] table[x=x, y=y, y error=y_error] {\tablea};

        \pgfplotstableread{\myroot__convergence_total_primal_test_group=mb=10,6.table}\tablea
        \addplot+[mb, mark=diamond*] table[x=x, y=y, y error=y_error] {\tablea};

                \pgfplotstableread{\myroot__convergence_total_primal_test_group=hw=20,1.table}\tablea
        \addplot+[seq, mark=square*] table[x=x, y=y, y error=y_error] {\tablea};

        \pgfplotstableread{\myroot__convergence_total_primal_test_group=hw=20,2.table}\tablea
        \addplot+[hw,mark=*] table[x=x, y=y, y error=y_error] {\tablea};

        \pgfplotstableread{\myroot__convergence_total_primal_test_group=hw=20,6.table}\tablea
        \addplot+[hw,mark=diamond*] table[x=x, y=y, y error=y_error] {\tablea};
    \end{loglogaxis}
\end{tikzpicture}
\end{subfigure}
\vspace*{-.1cm}
\caption{\label{an:news20:conv:nonstrict}Convergence result for \emph{news20}.
The error is given either as a function of the wall-clock time (left) or of the number
of samples processed (right).}
\end{figure}
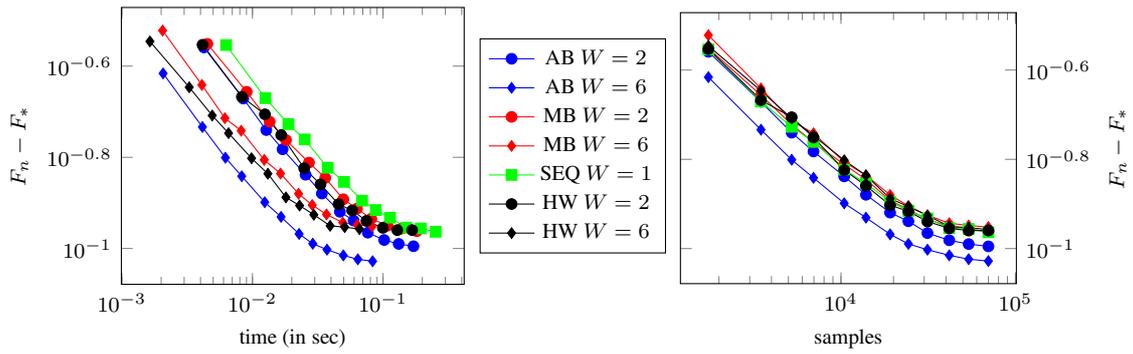

\begin{figure}
\centering
  \begin{subfigure}[t]{.58\linewidth}
    \centering
  \begin{tikzpicture}[
    ab/.style={color=blue, solid,mark options={fill=blue}},
    mb/.style={color=red, solid,mark options={fill=red}},
    seq/.style={color=green,solid,mark options={fill=green}},
    hw/.style={color=black, solid,mark options={fill=black}},
    every node/.style={font=\fontsize{8}{5}\selectfont}
  ]
    \begin{loglogaxis}[
        xlabel={time (in sec)},
        ylabel={$F_n - F_*$},
        legend entries={
            {{AB $W=2$},
             {AB $W=6$},
             {MB $W=2$},
             {MB $W=6$},
             {SEQ $W=1$},
             {HW $W=2$},
             {HW $W=6$},
             }
        },
        legend style={
            font=\fontsize{5}{5}\selectfont,at={(1.045,0.5)},anchor=west,
            legend columns=1},
        style={font=\footnotesize},
        width=0.7\textwidth,
        skip coords between index={0}{1}]
        \def\myroot{tables__parallel__spam}

        \pgfplotstableread{\myroot__convergence_depth_total_primal_test_group=ab=50,2.table}\tablea
        \addplot+[ab,mark=*] table[x=x, y=y, y error=y_error] {\tablea};

        \pgfplotstableread{\myroot__convergence_depth_total_primal_test_group=ab=50,6.table}\tablea
        \addplot+[ab, mark=diamond*] table[x=x, y=y, y error=y_error] {\tablea};

        \pgfplotstableread{\myroot__convergence_depth_total_primal_test_group=mb=50,2.table}\tablea
        \addplot+[mb,mark=*] table[x=x, y=y, y error=y_error] {\tablea};

        \pgfplotstableread{\myroot__convergence_depth_total_primal_test_group=mb=50,6.table}\tablea
        \addplot+[mb, mark=diamond*] table[x=x, y=y, y error=y_error] {\tablea};

                \pgfplotstableread{\myroot__convergence_depth_total_primal_test_group=hw=20,1.table}\tablea
        \addplot+[seq, mark=square*] table[x=x, y=y, y error=y_error] {\tablea};

        \pgfplotstableread{\myroot__convergence_depth_total_primal_test_group=hw=20,2.table}\tablea
        \addplot+[hw,mark=*] table[x=x, y=y, y error=y_error] {\tablea};

        \pgfplotstableread{\myroot__convergence_depth_total_primal_test_group=hw=20,6.table}\tablea
        \addplot+[hw,mark=diamond*] table[x=x, y=y, y error=y_error] {\tablea};
    \end{loglogaxis}
\end{tikzpicture}
\end{subfigure}%
\hfill
\begin{subfigure}[t]{0.40\textwidth}
  \centering
  \begin{tikzpicture}[
    ab/.style={color=blue, solid,mark options={fill=blue}},
    mb/.style={color=red, solid,mark options={fill=red}},
    seq/.style={color=green,solid,mark options={fill=green}},
    hw/.style={color=black, solid,mark options={fill=black}},
    every node/.style={font=\fontsize{8}{5}\selectfont}
  ]
    \begin{loglogaxis}[
        xlabel={samples},
        ylabel={$F_n - F_*$},
        yticklabel pos=right,
        style={font=\footnotesize},
        width=\textwidth,
        skip coords between index={0}{1}]
        \def\myroot{tables__parallel__spam}

        \pgfplotstableread{\myroot__convergence_total_primal_test_group=ab=50,2.table}\tablea
        \addplot+[ab,mark=*] table[x=x, y=y, y error=y_error] {\tablea};

        \pgfplotstableread{\myroot__convergence_total_primal_test_group=ab=50,6.table}\tablea
        \addplot+[ab, mark=diamond*] table[x=x, y=y, y error=y_error] {\tablea};

        \pgfplotstableread{\myroot__convergence_total_primal_test_group=mb=50,2.table}\tablea
        \addplot+[mb,mark=*] table[x=x, y=y, y error=y_error] {\tablea};

        \pgfplotstableread{\myroot__convergence_total_primal_test_group=mb=50,6.table}\tablea
        \addplot+[mb, mark=diamond*] table[x=x, y=y, y error=y_error] {\tablea};

                \pgfplotstableread{\myroot__convergence_total_primal_test_group=hw=20,1.table}\tablea
        \addplot+[seq, mark=square*] table[x=x, y=y, y error=y_error] {\tablea};

        \pgfplotstableread{\myroot__convergence_total_primal_test_group=hw=20,2.table}\tablea
        \addplot+[hw,mark=*] table[x=x, y=y, y error=y_error] {\tablea};

        \pgfplotstableread{\myroot__convergence_total_primal_test_group=hw=20,6.table}\tablea
        \addplot+[hw,mark=diamond*] table[x=x, y=y, y error=y_error] {\tablea};
    \end{loglogaxis}
\end{tikzpicture}
\end{subfigure}
\vspace*{-.1cm}
\caption{\label{an:spam:conv:nonstrict}Convergence result for \emph{spam}.
The error is given either as a function of the wall-clock time (left) or of the number
of samples processed (right).}
\end{figure}

\begin{figure}

\vspace*{.2cm}

\begin{subfigure}[t]{0.48\textwidth}
\centering
\begin{tikzpicture}[
    ab/.style={color=blue, solid,mark options={fill=blue}},
    mb/.style={color=red, solid,mark options={fill=red}},
    seq/.style={color=green,solid,mark options={fill=green}},
    hw/.style={color=black, solid,mark options={fill=black}},
    every node/.style={font=\fontsize{8}{5}\selectfont}
  ]
    \begin{axis}[
        xlabel={$W$},
        ylabel={time (in sec)},
        legend entries={
            {{AB},
             {AB ideal},
             {MB},
             {HW},
             {HW ideal}
             }
        },
        legend style={
            font=\fontsize{7}{5}\selectfont,at={(0.5,1.1)},anchor=south,
            legend columns=3},
        style={font=\footnotesize},
        width=\textwidth]
        \def\myroot{tables__parallel__spam}

        \pgfplotstableread{\myroot__time_to_target_group=ab=50.table}\tablea
        \addplot+[ab,mark=square*] table[x=x, y=y, y error=y_error] {\tablea};
        \addplot[ab,dashed] table[x=x, y expr=0.0929490/\thisrow{x}, y error=y_error] {\tablea};

        \pgfplotstableread{\myroot__time_to_target_group=mb=50.table}\tablea
        \addplot+[mb, mark=diamond*] table[x=x, y=y, y error=y_error] {\tablea};

        \pgfplotstableread{\myroot__time_to_target_group=hw=20.table}\tablea
        \addplot+[hw,mark=*] table[x=x, y=y, y error=y_error] {\tablea};
        \addplot[hw,dashed] table[x=x, y expr=0.10937924/\thisrow{x}, y error=y_error] {\tablea};
    \end{axis}
\end{tikzpicture}
\end{subfigure}%
\hfill%
\begin{subfigure}[t]{0.48\textwidth}
\begin{tikzpicture}[
    ab/.style={color=blue, solid,mark options={fill=blue}},
    mb/.style={color=red, solid,mark options={fill=red}},
    seq/.style={color=green,solid,mark options={fill=green}},
    hw/.style={color=black, solid,mark options={fill=black}},
    every node/.style={font=\fontsize{8}{5}\selectfont}
  ]
    \begin{axis}[
        xlabel={$W$},
        ylabel={samples / sec},
        legend entries={
            {{AB},
             {MB},
             {HW},
             }
        },
        legend style={
            font=\fontsize{7}{5}\selectfont,at={(0.5,1.1)},anchor=south,
            legend columns=3},
        style={font=\footnotesize},
        width=\textwidth]
        \pgfplotstableread{tables__parallel__spam__speedup_group=ab=50.table}\tablea
        \addplot+[ab,mark=square*] table[x=x, y=y, y error=y_error] {\tablea};

        \pgfplotstableread{tables__parallel__spam__speedup_group=mb=50.table}\tablea
        \addplot+[mb, mark=diamond*] table[x=x, y=y, y error=y_error] {\tablea};

        \pgfplotstableread{tables__parallel__spam__speedup_group=hw=20.table}\tablea
        \addplot+[hw,mark=*] table[x=x, y=y, y error=y_error] {\tablea};
    \end{axis}
\end{tikzpicture}
\end{subfigure}

\vspace*{-.1cm}

\caption{\label{an:time_to_target_ips:spam} On the left, time to achieve a given test error when varying
the number of workers for the \emph{spam} dataset. The dashed line represents an ideal speedup dividing the time for 1 worker by $W$.
On the right, number of process sampled per second as a function of $W$ for the \emph{spam} dataset.}
\end{figure}

\begin{figure}
\centering
  \begin{subfigure}[t]{.58\linewidth}
    \centering
  \begin{tikzpicture}[
    ab/.style={color=blue, solid,mark options={fill=blue}},
    mb/.style={color=red, solid,mark options={fill=red}},
    seq/.style={color=green,solid,mark options={fill=green}},
    hw/.style={color=black, solid,mark options={fill=black}},
    every node/.style={font=\fontsize{8}{5}\selectfont}
  ]
    \begin{loglogaxis}[
        xlabel={time (in sec)},
        ylabel={$F_n - F_*$},
        legend entries={
            {{AB $W=2$},
             {AB $W=6$},
             {MB $W=2$},
             {MB $W=6$},
             {SEQ $W=1$},
             {HW $W=2$},
             {HW $W=6$},
             }
        },
        legend style={
            font=\fontsize{5}{5}\selectfont,at={(1.045,0.5)},anchor=west,
            legend columns=1},
        style={font=\footnotesize},
        width=0.7\textwidth,
        skip coords between index={0}{1}]
        \def\myroot{tables__parallel__url}

        \pgfplotstableread{\myroot__convergence_depth_total_primal_test_group=ab=50,2.table}\tablea
        \addplot+[ab,mark=*] table[x=x, y=y, y error=y_error] {\tablea};

        \pgfplotstableread{\myroot__convergence_depth_total_primal_test_group=ab=50,6.table}\tablea
        \addplot+[ab, mark=diamond*] table[x=x, y=y, y error=y_error] {\tablea};

        \pgfplotstableread{\myroot__convergence_depth_total_primal_test_group=mb=50,2.table}\tablea
        \addplot+[mb,mark=*] table[x=x, y=y, y error=y_error] {\tablea};

        \pgfplotstableread{\myroot__convergence_depth_total_primal_test_group=mb=50,6.table}\tablea
        \addplot+[mb, mark=diamond*] table[x=x, y=y, y error=y_error] {\tablea};

                \pgfplotstableread{\myroot__convergence_depth_total_primal_test_group=hw=20,1.table}\tablea
        \addplot+[seq, mark=square*] table[x=x, y=y, y error=y_error] {\tablea};

        \pgfplotstableread{\myroot__convergence_depth_total_primal_test_group=hw=20,2.table}\tablea
        \addplot+[hw,mark=*] table[x=x, y=y, y error=y_error] {\tablea};

        \pgfplotstableread{\myroot__convergence_depth_total_primal_test_group=hw=20,6.table}\tablea
        \addplot+[hw,mark=diamond*] table[x=x, y=y, y error=y_error] {\tablea};
    \end{loglogaxis}
\end{tikzpicture}
\end{subfigure}%
\hfill
\begin{subfigure}[t]{0.40\textwidth}
  \centering
  \begin{tikzpicture}[
    ab/.style={color=blue, solid,mark options={fill=blue}},
    mb/.style={color=red, solid,mark options={fill=red}},
    seq/.style={color=green,solid,mark options={fill=green}},
    hw/.style={color=black, solid,mark options={fill=black}},
    every node/.style={font=\fontsize{8}{5}\selectfont}
  ]
    \begin{loglogaxis}[
        xlabel={samples},
        ylabel={$F_n - F_*$},
        yticklabel pos=right,
        style={font=\footnotesize},
        width=\textwidth,
        skip coords between index={0}{1}]
        \def\myroot{tables__parallel__url}

        \pgfplotstableread{\myroot__convergence_total_primal_test_group=ab=50,2.table}\tablea
        \addplot+[ab,mark=*] table[x=x, y=y, y error=y_error] {\tablea};

        \pgfplotstableread{\myroot__convergence_total_primal_test_group=ab=50,6.table}\tablea
        \addplot+[ab, mark=diamond*] table[x=x, y=y, y error=y_error] {\tablea};

        \pgfplotstableread{\myroot__convergence_total_primal_test_group=mb=50,2.table}\tablea
        \addplot+[mb,mark=*] table[x=x, y=y, y error=y_error] {\tablea};

        \pgfplotstableread{\myroot__convergence_total_primal_test_group=mb=50,6.table}\tablea
        \addplot+[mb, mark=diamond*] table[x=x, y=y, y error=y_error] {\tablea};

                \pgfplotstableread{\myroot__convergence_total_primal_test_group=hw=20,1.table}\tablea
        \addplot+[seq, mark=square*] table[x=x, y=y, y error=y_error] {\tablea};

        \pgfplotstableread{\myroot__convergence_total_primal_test_group=hw=20,2.table}\tablea
        \addplot+[hw,mark=*] table[x=x, y=y, y error=y_error] {\tablea};

        \pgfplotstableread{\myroot__convergence_total_primal_test_group=hw=20,6.table}\tablea
        \addplot+[hw,mark=diamond*] table[x=x, y=y, y error=y_error] {\tablea};
    \end{loglogaxis}
\end{tikzpicture}
\end{subfigure}
\vspace*{-.1cm}
\caption{\label{an:url:conv:nonstrict}Convergence result for \emph{url}.
The error is given either as a function of the wall-clock time (left) or of the number
of samples processed (right).}
\end{figure}

\begin{figure}
\vspace*{.2cm}

\begin{subfigure}[t]{0.48\textwidth}
\centering
\begin{tikzpicture}[
    ab/.style={color=blue, solid,mark options={fill=blue}},
    mb/.style={color=red, solid,mark options={fill=red}},
    seq/.style={color=green,solid,mark options={fill=green}},
    hw/.style={color=black, solid,mark options={fill=black}},
    every node/.style={font=\fontsize{8}{5}\selectfont}
  ]
    \begin{axis}[
        xlabel={$W$},
        ylabel={time (in sec)},
        legend entries={
            {{AB},
             {AB ideal},
             {MB},
             {HW},
             {HW ideal}
             }
        },
        legend style={
            font=\fontsize{7}{5}\selectfont,at={(0.5,1.1)},anchor=south,
            legend columns=3},
        style={font=\footnotesize},
        width=\textwidth]
        \def\myroot{tables__parallel__url}

        \pgfplotstableread{\myroot__time_to_target_group=ab=50.table}\tablea
        \addplot+[ab,mark=square*] table[x=x, y=y, y error=y_error] {\tablea};
        \addplot[ab,dashed] table[x=x, y expr=0.048273359/\thisrow{x}, y error=y_error] {\tablea};

        \pgfplotstableread{\myroot__time_to_target_group=mb=50.table}\tablea
        \addplot+[mb, mark=diamond*] table[x=x, y=y, y error=y_error] {\tablea};

        \pgfplotstableread{\myroot__time_to_target_group=hw=20.table}\tablea
        \addplot+[hw,mark=*] table[x=x, y=y, y error=y_error] {\tablea};
        \addplot[hw,dashed] table[x=x, y expr=0.0633868/\thisrow{x}, y error=y_error] {\tablea};
    \end{axis}
\end{tikzpicture}
\end{subfigure}%
\hfill%
\begin{subfigure}[t]{0.48\textwidth}
\begin{tikzpicture}[
    ab/.style={color=blue, solid,mark options={fill=blue}},
    mb/.style={color=red, solid,mark options={fill=red}},
    seq/.style={color=green,solid,mark options={fill=green}},
    hw/.style={color=black, solid,mark options={fill=black}},
    every node/.style={font=\fontsize{8}{5}\selectfont}
  ]
    \begin{axis}[
        xlabel={$W$},
        ylabel={samples / sec},
        legend entries={
            {{AB},
             {MB},
             {HW},
             }
        },
        legend style={
            font=\fontsize{7}{5}\selectfont,at={(0.5,1.1)},anchor=south,
            legend columns=3},
        style={font=\footnotesize},
        width=\textwidth]
        \pgfplotstableread{tables__parallel__url__speedup_group=ab=50.table}\tablea
        \addplot+[ab,mark=square*] table[x=x, y=y, y error=y_error] {\tablea};

        \pgfplotstableread{tables__parallel__url__speedup_group=mb=50.table}\tablea
        \addplot+[mb, mark=diamond*] table[x=x, y=y, y error=y_error] {\tablea};

        \pgfplotstableread{tables__parallel__url__speedup_group=hw=20.table}\tablea
        \addplot+[hw,mark=*] table[x=x, y=y, y error=y_error] {\tablea};
    \end{axis}
\end{tikzpicture}
\end{subfigure}

\vspace*{-.1cm}

\caption{\label{an:time_to_target_ips} On the left, time to achieve a given test error when varying
the number of workers on \emph{url}. The dashed line represents an ideal speedup dividing the time for 1 worker by $W$.
On the right, number of process sampled per second as a function of $W$ on \emph{url}.}
\end{figure}
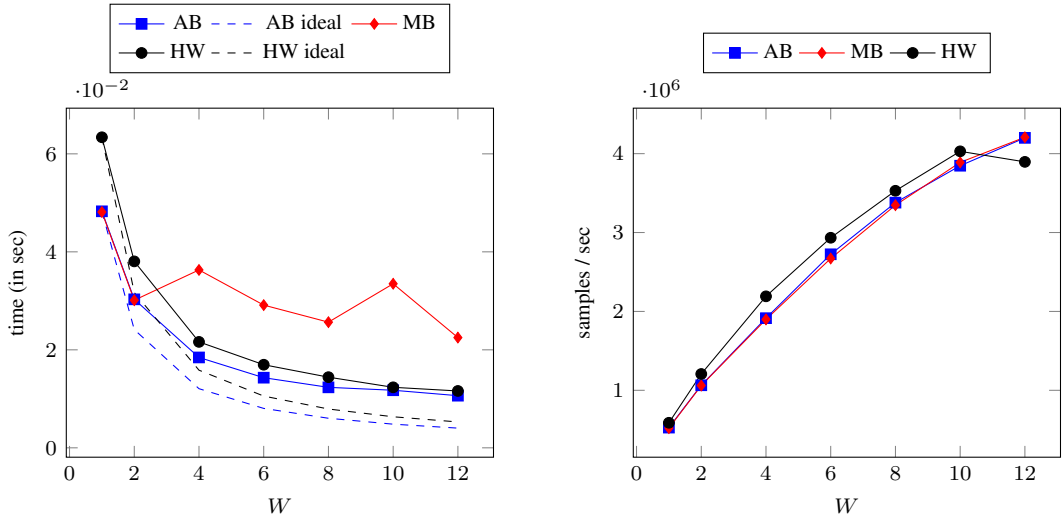

\clearpage

\ansubsection{Experimental results for SVRG}

We give a comparison of regular mini-batch and AdaBatch on news20 and spam
on figure~\ref{an:figure:svrg}. The difference is less marked than on
the url dataset which we believe is due to the relative simplicity of the optimization problem
on such datasets. The L2 regularization was chosen in order to achieve
relatively good validation error while retaining the good convergence of SVRG in the
strongly convex case.

\begin{figure}[h]
\begin{subfigure}[t]{0.48\textwidth}
\centering
\begin{tikzpicture}[
    ab/.style={color=blue, solid,mark options={fill=blue}},
    mb/.style={color=red, solid,mark options={fill=red}},
    seq/.style={color=green,solid,mark options={fill=green}},
    hw/.style={color=black, solid,mark options={fill=black}},
    every node/.style={font=\fontsize{8}{5}\selectfont}
  ]
    \begin{semilogyaxis}[
        xlabel={training samples},
        ylabel={training gap $F_N - F^*$},
        name=plot1,
        legend entries={
            {
            {AB/MB $B=1$},
            {AB $B=10$},
            {AB $B=200$},
            {MB $B=10$},
            {MB $B=200$},
             }
        },
        legend style={
            font=\fontsize{7}{5}\selectfont,at={(0.5,1.02)},anchor=south,
            legend columns=3},
        style={font=\footnotesize},
        width=0.9\linewidth,
        skip coords between index={12}{14}]]
        \newcommand{\myplot}[1]{%
        \pgfplotstableread{tables__svrg__news20__convergence_total_gap_group=#1.table}\tablea}

        \myplot{ab=1}
        \addplot+[hw,mark=*] table[x=x, y=y, y error=y_error] {\tablea};
        \myplot{ab=10}
        \addplot+[ab,mark=diamond*] table[x=x, y=y, y error=y_error] {\tablea};
        \myplot{ab=200}
        \addplot+[ab,mark=square*] table[x=x, y=y, y error=y_error] {\tablea};

        \myplot{mb=10}
        \addplot+[mb,mark=diamond*] table[x=x, y=y, y error=y_error] {\tablea};
        \myplot{mb=200}
        \addplot+[mb,mark=square*] table[x=x, y=y, y error=y_error] {\tablea};
    \end{semilogyaxis}
\end{tikzpicture}
\vspace*{-.1cm}
\caption{\label{an:figure:svrg_news} Comparison of the training gap $F_n - F_*$ for regular mini-batch vs AdaBatch
with SVRG on \emph{news20} for the log loss with an L2 penalty of $\frac{10^{-5}}{2} \norm{w}_{\diag{p}}^2$.}
\end{subfigure}%
\hfill%
\begin{subfigure}[t]{0.48\textwidth}
\centering
\begin{tikzpicture}[
    ab/.style={color=blue, solid,mark options={fill=blue}},
    mb/.style={color=red, solid,mark options={fill=red}},
    seq/.style={color=green,solid,mark options={fill=green}},
    hw/.style={color=black, solid,mark options={fill=black}},
    every node/.style={font=\fontsize{8}{5}\selectfont}
  ]
    \begin{semilogyaxis}[
        xlabel={training samples},
        ylabel={training gap $F_N - F^*$},
        name=plot1,
        legend entries={
            {
            {AB/MB $B=1$},
            {AB $B=10$},
            {AB $B=200$},
            {MB $B=10$},
            {MB $B=200$},
             }
        },
        legend style={
            font=\fontsize{7}{5}\selectfont,at={(0.5,1.02)},anchor=south,
            legend columns=3},
        style={font=\footnotesize},
        width=0.9\linewidth]
        \newcommand{\myplot}[1]{%
        \pgfplotstableread{tables__svrg__spam__convergence_total_gap_group=#1.table}\tablea}

        \myplot{ab=1}
        \addplot+[hw,mark=*] table[x=x, y=y, y error=y_error] {\tablea};
        \myplot{ab=10}
        \addplot+[ab,mark=diamond*] table[x=x, y=y, y error=y_error] {\tablea};
        \myplot{ab=200}
        \addplot+[ab,mark=square*] table[x=x, y=y, y error=y_error] {\tablea};

        \myplot{mb=10}
        \addplot+[mb,mark=diamond*] table[x=x, y=y, y error=y_error] {\tablea};
        \myplot{mb=200}
        \addplot+[mb,mark=square*] table[x=x, y=y, y error=y_error] {\tablea};
    \end{semilogyaxis}
\end{tikzpicture}
\vspace*{-.1cm}
\caption{\label{an:figure:svrg_spam} Comparison of the training gap $F_n - F_*$ for regular mini-batch vs AdaBatch
with SVRG on \emph{spam} for the log loss with an L2 penalty of $\frac{10^{-6}}{2} \norm{w}_{\diag{p}}^2$.}
\end{subfigure}
\caption{\label{an:figure:svrg}Comparison of regular mini-batch vs AdaBatch with SVRG on \emph{news20} and \emph{spam} dataset.}
\end{figure}

\clearpage
\bibliography{biblio}
\end{document}